
\typeout{IJCAI--21 Instructions for Authors}

\documentclass{article}
\pdfpagewidth=8.5in
\pdfpageheight=11in
\usepackage{ijcai21}

\usepackage{times}
\usepackage{soul}
\usepackage{url}
\usepackage[hidelinks]{hyperref}
\usepackage[utf8]{inputenc}
\usepackage[small]{caption}
\usepackage{graphicx}
\usepackage{amsmath}
\usepackage{amsthm}
\usepackage{booktabs}
\usepackage{algorithm}
\usepackage{algorithmic}

\urlstyle{same}

\newtheorem{theorem}{Theorem}

\pdfinfo{
/TemplateVersion (IJCAI.2021.0)
}

\title{Boosting Variational Inference With Locally Adaptive Step-Sizes}
\author{}

\author{
Gideon Dresdner,\textsuperscript{1}\and
Saurav Shekhar\textsuperscript{1}\and\\
Fabian Pedregosa\textsuperscript{2}\and
Francesco Locatello\textsuperscript{1}\and\\
Gunnar~R\"atsch\textsuperscript{1}
\affiliations
\textsuperscript{1}
Dept. for Computer Science, ETH Zurich, Universitätsstrasse 6, 8092 Zurich, Switzerland\\
\textsuperscript{2}
Google Research
\emails
dgideon@ethz.ch
}
\newcommand{\citet}{\cite}
\newcommand{\citep}{\cite}

\usepackage{amsmath, amsthm, amssymb, amsfonts, bm}
\usepackage{mathtools}
\usepackage{nicefrac}
\usepackage{enumitem}
\usepackage{algorithm}
\usepackage{algorithmic}
\usepackage{xcolor, color, colortbl}
\definecolor{mydarkblue}{rgb}{0,0.08,0.45}

\usepackage{booktabs}
\usepackage{xspace}

\newcommand{\hide}[1]{}

\newcommand{\cQ}{\mathcal Q}

\newcommand{\Q}{\mathcal Q}

\newcommand{\grad}{\nabla}
\DeclareMathOperator*{\Exp}{\mathbb{E}}

\DeclareMathOperator*{\argmin}{arg\,min}
\DeclareMathOperator*{\argmax}{arg\,max}
\DeclareMathOperator{\conv}{conv}

\DeclarePairedDelimiter\theset{\{}{\}}

\newcommand{\D}{\mathcal{D}}
\newcommand{\domain}{\cQ}
\newcommand{\Cf}{{\mathcal C}_{\cQ}}

\newcommand{\CfQ}{C_{f,\bar{\cQ}}}

\newcommand{\dkl}{D^{KL}}

\newcommand{\innerp}[1]{\langle #1\rangle}

\newcommand\given[1][]{#1\vert}

\usepackage{xcolor, color, colortbl}
\definecolor{Gray}{gray}{0.92}
\definecolor{mydarkgreen}{RGB}{39,130,67}
\newcommand{\green}{\color{mydarkgreen}}
\definecolor{mydarkred}{RGB}{192,47,25}
\newcommand{\red}{\color{mydarkred}}
\usepackage{pifont}
\newcommand{\cmark}{\ding{51}}
\newcommand{\xmark}{\ding{55}}
\newcommand{\greencheck}{\green\large\cmark}
\newcommand{\redx}{\red\large\xmark}

\newcommand{\bX}{\mathbf{X}}
\newcommand{\by}{\mathbf{y}}
\newcommand{\bw}{\mathbf{w}}

\newcommand{\defeq}{\stackrel{\text{def}}{=}}
\newcommand{\dual}[1]{#1^{*}}
\newcommand{\indicator}{\imath}
\newcommand{\linearconvergence}{\mathcal{O}(1/t)}

\newcommand{\operatortextsc}[1]{\operatorname{\textsc{#1}}}

\newcommand{\gammamax}{\gamma^{max}}

\usepackage{scalerel,stackengine}
\stackMath
\newcommand\reallywidehat[1]{%
\savestack{\tmpbox}{\stretchto{%
  \scaleto{%
    \scalerel*[\widthof{\ensuremath{#1}}]{\kern-.6pt\bigwedge\kern-.6pt}%
    {\rule[-\textheight/2]{1ex}{\textheight}}%
  }{\textheight}%
}{0.5ex}}%
\stackon[1pt]{#1}{\tmpbox}%
}
\usepackage{fancyhdr} 
\fancyhf{}
\cfoot{\vspace{2em}\thepage}
\pagestyle{fancy} 

\usepackage{cancel}

\makeatletter
\newtheorem*{rep@theorem}{\rep@title}
\newcommand{\newreptheorem}[2]{%
\newenvironment{rep#1}[1]{%
 \def\rep@title{#2 \ref{##1}}%
 \begin{rep@theorem}}%
 {\end{rep@theorem}}}
\makeatother

\newreptheorem{theorem}{Theorem}
\newtheorem{lemma}{Lemma}
\newreptheorem{lemma}{Lemma}
\newtheorem{remark}{Theorem}
\newreptheorem{remark}{Theorem}
\newtheorem{proposition}{Theorem}
\newreptheorem{proposition}{Theorem}

\newcommand\markerlessfootnote[1]{%
  \begingroup
  \renewcommand\thefootnote{}\footnote{#1}%
  \addtocounter{footnote}{-1}%
  \endgroup
}

\begin{document}

\maketitle

\vspace{-3mm}
\begin{abstract}
Variational Inference makes a trade-off between the capacity of the variational family and the tractability of finding an approximate posterior distribution. Instead, Boosting Variational Inference allows practitioners to obtain increasingly good posterior approximations by spending more compute. The main obstacle to widespread adoption of Boosting Variational Inference is the amount of resources necessary to improve over a strong Variational Inference baseline. In our work, we trace this limitation back to the global curvature of the KL-divergence. We characterize how the global curvature impacts time and memory consumption, address the problem with the notion of local curvature, and provide a novel approximate backtracking algorithm for estimating local curvature. We give new theoretical convergence rates for our algorithms and provide experimental validation on synthetic and real-world datasets. 
\end{abstract}

\vspace{-3.5mm}
\section{Introduction}
\vspace{-.5mm}

The central problem of Bayesian inference is to estimate the posterior distribution $p(z\given X)$ of hidden variables~$z$, given observations $X$, a likelihood model $p(X\given z)$, and a prior distribution $p(z)$.
The Variational Inference (VI) approach 
\citep{jordan1999introduction,Blei2017review}
consists in finding the best approximation in Kullback-Leibler (KL)-divergence to the posterior from within a family of tractable densities $\Q$. This is posed as the following optimization problem:
\begin{equation}
  \label{eq:vanillavi}
  \argmin_{q\in\Q} \left\{ \dkl(q) \defeq \int q(z)\ln\frac{q(z)}{p(z|X)} dz \right\}
\end{equation}
We occasionally abuse the notation for $\dkl$: When the target distribution is omitted, as above, it is understood that the target is the true posterior distribution $p(z|X)$.

There is a trade-off between the quality of the approximation and the difficulty of the optimization problem.
While a richer family may yield a better approximation of the posterior, finding such a solution requires solving a more complex optimization problem.
A growing body of recent work addresses this trade-off by specifying variational families that are richer but still tractable~\citep{RezendeM15,Saeedi2017VPA,Salimans2015MarkovCM,Saxena2017Variational,Cranko2018Mar}. However, once the VI solver has converged, one cannot spend more compute to improve the approximation.%
If the approximation is too poor to be useful, it must be abandoned and the inference procedure restarted with a richer variational family.

The recent line of work in Boosting VI takes a different approach.
Instead of specifying a richer variational family, Boosting VI greedily constructs one using mixtures of densities from a simpler base family~\citep{Guo:2016tg,Miller:2016vt,LocKhaGhoRat18,locatello2018boosting,Cranko2018Mar}.
The key idea is that one can iteratively build a better approximation to the target posterior by fitting the residual parts which are not yet well-approximated.

Despite advances in making boosting agnostic to the choice of the variational family \citep{locatello2018boosting}, this line of work has fallen short of its potential. %
The reason for this is that Boosting VI does not reliably improve the variational approximation in a reasonable number of iterations \citep{Guo:2016tg,LocKhaGhoRat18,locatello2018boosting}.

In this work, we present a new technique for determining the mixture weights of Boosting VI algorithms which improves the variational approximation in a realistic number of iterations.
As we shall see, the previous convergence rates depend on two terms: a term depending on the \textit{global} curvature of the KL and the initial error.
Practitioners often focus on decreasing the latter term, but the first one can be arbitrarily large without imposing extra assumptions.%

We are able to improve the dependency on the curvature in the convergence rate by tuning the mixture weights %
according to a quadratic function satisfying a sufficient decrease condition, i.e.\ is sufficiently tight on the KL divergence objective \citep{Pedregosa2018Jun}.

\looseness=-1%
In the black-box VI setting, checking for exact upper bounds is not feasible due to sampling errors. Therefore, we consider the case where the estimate of the bound is \textit{inexact}. Using this approximate local upper-bound, we develop a fast and memory efficient away-step black-box Boosting VI algorithm.

Our {\bfseries main contributions} can be summarized as follows:
\vspace{-1mm}
\begin{enumerate}[leftmargin=*]
\item 
We introduce an \textit{approximate} sufficient decrease condition and prove that the resulting backtracking algorithm converges with a rate of $\mathcal O(1/t)$ with an improved dependency on the curvature constant.
\item
We develop an away-step Boosting VI algorithm that relies on our approximate backtracking algorithm.
This enables Boosting VI methods to selectively downweight previously seen components to obtain sparser solutions, thus reducing overall memory costs.
\item
We present empirical evidence demonstrating that our method is both faster and more robust than existing methods.
The adaptive methods also yield more parsimonious variational approximations than previous techniques.
\end{enumerate}

\vspace{-2mm}
\section{Related work}

\begin{table*}[t]
\centering
\small
\begin{tabular}{r|cccccc}
    & \shortstack{conv.\ to true post.} & \shortstack{black-box} & \shortstack{ada.\ weight tuning}
  & \shortstack{KL obj.} & \shortstack{flexible num.\ of comps.} & \shortstack{gen.\ post.}\\
  \cline{1-7}
  \citep{Guo:2016tg} &           \redx &        \redx &        \redx &        \greencheck &    \greencheck &  \greencheck\\
  \cellcolor{Gray}\citep{Miller:2016vt}\cellcolor{Gray}&    \cellcolor{Gray}     \redx &    \cellcolor{Gray}    \redx & \cellcolor{Gray}       \greencheck & \cellcolor{Gray} \greencheck & \cellcolor{Gray}   \greencheck & \cellcolor{Gray} \greencheck\\
  \citep{locatello2018boosting}& {\greencheck}$^\ast$ &  \greencheck &  \redx &        \greencheck &    \greencheck &  \greencheck\\
  \cellcolor{Gray}\citep{LocKhaGhoRat18} & \cellcolor{Gray}      {\greencheck}$^\ast$  & \cellcolor{Gray}\redx & \cellcolor{Gray}       \redx & \cellcolor{Gray}       \greencheck  & \cellcolor{Gray}  \greencheck & \cellcolor{Gray} \greencheck\\
  \citep{campbell2019universal} &\greencheck &  \redx &        \greencheck &  \redx &          \greencheck &  \greencheck\\
  \cellcolor{Gray}\citep{Cranko2018Mar} & \cellcolor{Gray}       \greencheck & \cellcolor{Gray} \redx & \cellcolor{Gray}       \greencheck & \cellcolor{Gray} \greencheck &  \cellcolor{Gray}  \greencheck& \cellcolor{Gray}  \redx\\
  \citep{lin2019fast} &  \redx &        \redx &        \greencheck &  \greencheck &    \redx &        \greencheck\\
  \textbf{\emph{This work}} \cellcolor{Gray}&  \cellcolor{Gray}  \greencheck  & \cellcolor{Gray}\greencheck & \cellcolor{Gray} \greencheck & \cellcolor{Gray} \greencheck & \cellcolor{Gray}   \greencheck & \cellcolor{Gray} \greencheck\\
\end{tabular}
\caption{Comparison to previous work.
\emph{Conv.\ to true post.}: the paper provides asymptotic convergence guarantees ($\ast$) under mild conditions on the variational family $\cQ$ such as clipped tails and initialization in the neighborhood of the optimum.
\emph{black-box}: agnostic to the form of the variational family or target distribution.
\emph{ada.\ weight tuning}: provides methods for tuning the mixture weights based on the quality of the components.
\emph{KL obj.}: minimizes the traditional KL-divergence VI objective.
\emph{flexible num.\ comps.}: does not require the user to specify the number of components in advance.
\emph{gen.\ post.}: the approximate posterior can be sampled from and not merely used to evaluate the probability density.
}\label{tab:rel_works}
\end{table*}

We refer to~\citet{Blei2017review} for a review of Variational Inference (VI). Our focus is to use boosting to increase the complexity of a density, similar to the goal of Normalizing Flows~\citep{RezendeM15}, MCMC-VI hybrid methods~\citep{Saeedi2017VPA,Salimans2015MarkovCM}, distribution transformations~\citep{Saxena2017Variational}, and boosted density estimation
\citep{Cranko2018Mar,locatello2018clustering}.
Our approach is in line with several previous ones using mixtures of distributions to improve the expressiveness of the variational approximation~\citep{Jaakkola98Mixtures,Jerfel2017thesis} but goes further to draw connections to the optimization literature.
While our method does not leverage classical weak learners as in \citep{Cranko2018Mar}, it does return a solution which is sampleable and is therefore more amenable to downstream Bayesian analysis.

While boosting has been well studied in other settings~\citep{meir2003introduction}, it has only recently been applied to the problem of VI. Related works of~\citet{Guo:2016tg,Miller:2016vt} developed the algorithmic framework and conjectured a convergence rate of $\linearconvergence$.
Later, \citet{LocKhaGhoRat18} identified sufficient conditions for convergence and provided explicit constants to the~$O(1/t)$~rate.
They based their analysis on the smoothness of the KL-divergence when using carefully constructed variational base families which are restrictive in practice.

In \citet{locatello2018boosting}, these assumptions were reduced to a simple entropy regularizer which allows for a black-box implementation of the boosting subroutine. %
The promise of their work is to make Boosting VI useful in practice while retaining an $\mathcal O(1/t)$ convergence rate. %

Recent work by \citet{campbell2019universal} and \citet{lin2019fast} also explore the relationship between Boosting VI and curvature.
In this paper, we present our unique view on curvature which does not require sacrificing the KL-divergence for a smooth objective as in \cite{campbell2019universal} or fixing the number of components in the mixture \citet{lin2019fast}.
Finally, note that~\citet{LocKhaGhoRat18} also suggests performing line-search on a quadratic approximation of the KL-divergence. Crucially, they suggest using a \textit{global} approximation which renders their step-size arbitrarily small. See Table~\ref{tab:rel_works} for comparison to prior work.

Backtracking line-search and related variants are well-understood for deterministic objectives with projectionable constraints \citep{Boyd:2004uz}. However, backtracking techniques have only recently been applied to Frank-Wolfe by \citet{Pedregosa2018Jun}. Our work lies at the intersection of these developments in VI and backtracking line-search.

\section{Boosting Variational Inference}

Boosting Variational Inference (VI) aims to
solve an expanded version of the problem defined in Equation~\eqref{eq:vanillavi}
by optimizing over the convex hull of $\cQ$ defined as,
\begin{equation*}
  \conv(\Q) \defeq \left\{
    \textstyle\sum_i \alpha_i q_i\mid q_i\in\cQ,\,\textstyle\sum_i \alpha_i = 1,\, \alpha_i > 0
  \right\}
\end{equation*}
\looseness=-1%
Boosting VI algorithms take a greedy, two-step approach to solving this problem.
At each iteration, first, the posterior residual $\nicefrac{p_X}{q_t}$ is approximated with an element of $\Q$ and added to the list of components.
Then, the weights of the mixture are updated. Previous research on Boosting VI has focused on the first step --- selecting the best fitting density of the residual --- whereas our work takes into consideration the second step --- adjusting the weights. As we shall see, step-size choice has a significant impact on both the constant in the convergence rate as well as the observed speed in practice.

\textbf{Selecting the next component:} Greedily approximating the residual $\nicefrac{p_X}{q_t}$ is equivalent to solving a linear minimization problem as first realized by \citet{Guo:2016tg} and later formalized by \citet{LocKhaGhoRat18}.
Our contribution --- presented in Section~\ref{sec:local-boosting}~---~can be combined with any of these approaches.

Without imposing additional constraints on the boosted Variational Inference problem, the greedy subproblem has degenerate solutions, i.e.\ Dirac delta located at the maximum of the residual \citep{LocKhaGhoRat18,locatello2018boosting,Guo:2016tg,Miller:2016vt}. This is the central challenge addressed by existing work on Boosting Variational Inference.
The approach taken in  
\citet{locatello2018boosting} is to use a generic entropy regularizer as the additional constraint. This conveniently reframes the subproblem as another KL-minimization problem which can then be fed into existing black-box VI solvers.

In their work, the greedy step can be formulated as a constrained linear minimization problem over the variational family,
\begin{equation} \label{def:lmo}
    \argmin_{\substack{s\in\cQ\\H(s)\geq -  M}} 
    \innerp{s,\grad\dkl(q_t)}
\end{equation}
where $H$ is the differential entropy functional. This results in a modified overall Variational Inference objective over the entropy constrained mixtures, rather than all of $\conv(\cQ)$:
\begin{equation}
    \argmin_{\conv(\overline\cQ)} \dkl(q)
\end{equation}
where $\overline{\mathcal{Q}} \defeq \theset{s\in\mathcal Q\mid H(s) \geq -M}$.

By reformulating the differential entropy constraint in Equation~\eqref{def:lmo} using a Lagrange multiplier and then setting the multiplier to one, one arrives at a convenient form for the greedy subproblem (Alg.~\ref{alg:main} line 3):
\begin{equation}
\label{eq:bbbvi}
    \argmin_{s\in\cQ}
    \dkl\left(s\,\Big\|\,\frac{p_X}{q_t}\right) = \argmin_{s\in\cQ} \Exp_s[\ln s] - \Exp_s [\ln \frac{p_X}{q_t}]
\end{equation}
Intuitively, this objective encourages the next component to be close to the target posterior $\ln p_X$ while simultaneously being different from current iterate $\ln q_t$ and also being non-degenerate via the negative entropy term $\ln s$.

\textbf{Predefined step-size:} To update the mixture,~\citet{locatello2018boosting} take a convex combination between the current approximation and the next component (Alg.~\ref{alg:main} line~7) with a \textit{predefined} step-size of $\gamma_t=\frac{2}{t+2}$:
\begin{align}\label{eq:predefined}
    q_{t+1} = \left(1-\frac{2}{t+2}\right)q_t + \frac{2}{t+2} s_t
\end{align}
where $v_t$ is set to $q_t$.
\section{Local Boosting Variational Inference}\label{sec:local-boosting}
We now describe our main algorithmic contribution.

\smallskip
\noindent
\textbf{Notation:}
We view expectations as a special case of functional inner-products. Given two functionals, $a,b: z\mapsto \mathbb R$, their inner-product is $\innerp{a,b}\defeq\int a(z)b(z) dz$. Practically, we only encounter these integrals when one of the arguments is a density that can be sampled from thus allowing us to use Monte-Carlo: $\innerp{a,b}\approx \widehat{\innerp{a,b}} \defeq \frac{1}{n}\sum_{i=1}^k b(\zeta_i)$ where $\zeta_i\sim a$.

\smallskip
\noindent
\textbf{Assumption:}
We assume that for all $\varepsilon > 0$ there exists an $n\in\mathbb N$ number of Monte-Carlo samples such that for all $s\in\mathcal Q$ and $q\in\conv(\mathcal Q)$, the Monte-Carlo approximation $\reallywidehat{\innerp{s,\grad\dkl(q)}}$ is $\varepsilon$-close to the true value:
\begin{equation}
    | \innerp{s,\grad\dkl(q)} - \reallywidehat{\innerp{s,\grad\dkl(q)}}| \leq \varepsilon
    \label{eq:assumption}
\end{equation}
This assumption states that we can approximate the value of the objective in Equation~\eqref{def:lmo} up to a predefined tolerance.

\smallskip

Now, suppose we are at iteration $t$ of the boosting algorithm (Alg.~\ref{alg:main}). $q_t$ is the current variational approximation containing at most $t$ components. The next component $s_t$ is provided by line 3. $v_t$ is then returned from the corrective components procedure (described in Sec.~\ref{sec:corrective} and App.~\ref{app:fwvariants}).
Let $d_t = s_t - v_t$ be the update direction.

Our goal is to solve the following one dimensional problem,
\begin{equation}
  \label{eq:stepsizeproblem}
   \gamma_t \in \argmin_{\gamma\in [0,1]} \dkl(q_t + \gamma d_t)
\end{equation}
Then we can set $q_{t+1} = q_t + \gamma_t d_t$ as described in line~7 of Algorithm~\ref{alg:main}.
Solving this problem --- often termed ``line-search'' --- may be hard when no closed-form solution is available e.g.\ in the case of black-box VI~\citep{Locatello:2017tq,LocKhaGhoRat18,locatello2018boosting,Pedregosa2018Jun}. In practice, general approaches such as gradient descent struggle to handle the changing curvature of the KL-divergence throughout $\conv(\overline\cQ)$ (c.f.\ Sec.~\ref{sec:experiments}).

 Instead, \citet{Pedregosa2018Jun} uses so-called Deminov-Rubinov line-search. Rather than solving the line-search problem directly, their technique defines a surrogate objective:
\begin{align}
\label{eq:defQt}
    Q_t(\gamma,C)
    &\defeq \dkl(q_t) + \gamma\innerp{d_t,\nabla\dkl(q_t)} + \frac{C\gamma^2}{2}
\end{align}
Importantly, there exists $C_t>0$ such that for all $\gamma\in [0,1]$, $Q_t(\gamma,C_t)$ is a valid upper bound on the line-search problem (Eq.~\eqref{eq:stepsizeproblem}).
In this case, we say that $C_t$ satisfies the sufficient decrease condition:
\begin{align}~\label{eq:decrease_condidtion}
    \dkl(q_t + \gamma d_t) \leq Q_t(\gamma, C_t)
\end{align}
Essentially, we are bounding the first-order Taylor expansion of the KL-divergence at $q_t + \gamma d_t$ (as in Eq.~\eqref{eq:stepsizeproblem}).

Unlike the finite dimensional setting described in \citet{Pedregosa2018Jun}, in black-box VI we are unable to validate the sufficient decrease condition directly because we can only approximate $Q_t$ within a precision of $\varepsilon_t$,
\begin{equation}
|Q_t(\gamma, C) - \widehat Q_t(\gamma, C)| \leq \varepsilon_t
\end{equation}
where $\widehat Q_t$ is the Monte-Carlo approximation to $Q_t$.
This leads us to define an \emph{approximate} sufficient decrease condition (c.f.\ Appendix~\ref{sec:assumption-consequences}):
\begin{equation}\label{eq:defQt-hat}
\hspace{-1.1mm}
    \widehat Q_t'(C,\gamma) \defeq \widehat\dkl(q_t) + \gamma\reallywidehat{\innerp{\grad\dkl(q_t), d_t}} + \frac{C}{2}\gamma^2 \mathbf{\color{red}+ 2\bm\varepsilon_t}
\end{equation}
where $\varepsilon_t = \mathcal{O}(1/t^2)$. If we assume that the number of samples is large enough, then we can guarantee that
\begin{equation}
    \dkl(q_t + \gamma d_t) \leq Q_t(C,\gamma) \leq \widehat Q_t'(C,\gamma)
\end{equation}
Intuitively, we are adding an offset of $\varepsilon_t$ to compensate for errors in the Monte-Carlo approximation of $Q_t$. As we will see in Section~\ref{sec:analysis}, to obtain an overall convergence rate we require that $\varepsilon_t$ decreases at each iteration. Equivalently, this requires increasing the number of samples at each step of the algorithm.

\vspace{-1mm}
Suppose that, for some $C_t$, $\widehat Q_t'$ satisfies the approximate sufficient decrease condition.
Then, $\widehat Q_t'(\gamma,C_t)$ can easily be minimized with respect to $\gamma_t$ by setting the derivative with respect to $\gamma$ equal to zero and solving:
\vspace{-.5mm}
\begin{equation}
  \label{eq:ls_closedform}
  \gamma_t
  = \min\left\{ -\frac{\reallywidehat{\innerp{\nabla \dkl(q_t), d_t}}}{C_t},\gammamax_t\right\}
\end{equation}
\vspace{-.5mm}
\noindent
where $\gammamax_t\in (0,1]$ depends on the corrective algorithm variant (c.f.\ Sec.~\ref{sec:corrective}).
Equation~\eqref{eq:ls_closedform} shows that $C_t$ and $\gamma_t$ are inversely proportional.
Intuitively, the new component should be aggressively favored when it is highly correlated with the gradient of the KL-divergence since this gradient is the optimal decrease direction.
We want to take advantage of these low curvature opportunities to take more aggressive steps towards improving the posterior approximation.

\setlength{\textfloatsep}{2pt}
\setlength{\floatsep}{5pt}
\begin{algorithm}[bt]
  \caption{Template for Boosting VI algorithms}
  \label{alg:main}
\begin{algorithmic}[1]
\STATE{\bfseries Input:} $q_0\in\cQ,\,C_{-1}>0$
  \FOR{$t=0, 1 \ldots $}
  \STATE $s_t = \argmin_{s\in\cQ} \dkl(s\,\|\,\frac{p_X}{q_t})$\hfill\COMMENT{next component}
  \STATE $v_t,\gammamax_t = \operatortextsc{correct\_components}(s_t,q_t)$\label{line:vt}
  \STATE $\gamma_t, C_t = \operatortextsc{find\_step\_size}(q_t,s_t-v_t,C_{t-1},\gammamax_t)$
  \STATE{Update:} $q_{t+1} = q_t + \gamma_t (s_t-v_t)$
  \ENDFOR
\end{algorithmic}
\end{algorithm}
\begin{algorithm}[bt]
\caption{Find step-size with approximate backtracking}\label{alg:approxback}
\begin{algorithmic}[1]
\FUNCTION{$\operatortextsc{find\_step\_size}(q_t,d_t,C,\gammamax_t$)}
\STATE{Choose:} $\tau > 1$, $\eta \leq 1$, $\varepsilon_0 > 0$, $\textsc{imax} \in \mathbb{Z}_+$
\STATE{Let:} $g_t = -\innerp{\grad \dkl(q_t),d_t}$
\STATE{Set:} $C \leftarrow C/\eta$, $\gamma = \min\{g_t/C,1\}$, $i = 0$
\WHILE{$\widehat\dkl(q_t + \gamma d_t) > \widehat Q_t'(\gamma,C)$}
\IF{$i > \textsc{imax}$}
\STATE{\bfseries return} $\frac{2}{t+2},C$\hfill\COMMENT{predefined step-size}
\ENDIF
\STATE $C \leftarrow \tau C$
\STATE $\gamma = \min\{g_t/C,\gammamax_t\}$
\STATE $i \leftarrow i+1$
\ENDWHILE
\STATE{\bfseries return} $\gamma,C$\hfill\COMMENT{adaptive step-size}
\ENDFUNCTION
\end{algorithmic}
\end{algorithm}

Therefore, our goal is to find a $C_t$ which satisfies the approximate sufficient decrease conditions of Equation~\ref{eq:decrease_condidtion} while also being as small as possible.
This is achieved using approximate backtracking on $C$ (Alg.~\ref{alg:approxback}).

The cost of this procedure is the cost of estimating the sufficient decrease condition for each proposal of~$C_t$. The Monte-Carlo estimates used to compute $\widehat Q_t'$ can be reused but the approximation to $\dkl(q_t+\gamma d_t)$ must be re-estimated.

\looseness=-1%
Observe that in order to guarantee convergence, there must exist a global bound on $C_t$.
This quantity is known as the \emph{global} curvature and is defined directly as the supremum over all possible $C_t$'s \citep{Jaggi:2013wg}:
\vspace{-.5mm}
\begin{equation}\label{def:Cf}
    C_{\cQ} \defeq \sup_{\substack{s\in\cQ\\q\in\conv(\cQ)\\
    \gamma\in[0,1]\\y = q + \gamma(s-q)}} \frac{2}{\gamma^2} \dkl(y\,\|\,q)
\end{equation}
Section~\ref{sec:analysis_discussion} provides theoretical results clarifying the relationship between~$C_t$~and~$C_\cQ$.

We make a slight modification to the algorithm to circumvent this problem, summarized in lines~(6-8)~of~Algorithm~\ref{alg:approxback}.
When the adaptive loop fails to find an approximation within $\textsc{imax}$ number of steps, it exits and simply performs a predefined step-size update.
We find that this trick allows us to quickly escape from curved regions of the space. After just one or two predefined update steps, we can efficiently estimate the curvature and apply the approximate backtracking procedure. See Appendix~\ref{app:results} for results on early termination.

\vspace{-1mm}
\subsection{Correcting the current mixture}\label{sec:corrective}
Not only does our work analyze and solve the problem of varying curvature in Boosting VI, it also enables the use of more sophisticated, corrective variants of the boosting algorithm. Corrective variants aim to address the problem of downweighting or removing suboptimally chosen components. This is important in the approximate regime of VI where most quantities are estimated using Monte-Carlo. However, it is impossible to apply these corrective methods to boosting without a step-size estimation method such as we describe in Section~\ref{sec:local-boosting}.

In the optimization literature, there are two corrective methods (c.f.\ App.~\ref{app:fwvariants}).
Either one of these variants can be substituted for the~$\operatortextsc{correct\_components}$~procedure. 
Both corrective methods begin by searching for the worst previous component, $\overline{v}$~in the sense of most closely aligning with the positive gradient of the current approximation:
\begin{equation}
    \overline{v} = \argmax_{v\in\mathcal S_t} \left\{\innerp{v,\grad \dkl(q_t)} = \Exp_v\ln\frac{p_X}{q_t} \right\}
\end{equation}
where $\mathcal S_t = \{s_1,s_2,\ldots,s_k\}$ is the current set of components. $\bar v$ is found by estimating each element of $\mathcal S_t$ using Monte-Carlo samples and selecting the argmax. %
\smallskip

\noindent
\textbf{Implementation:}
Using the work of \citet{locatello2018boosting}, we perform the greedy step using existing Variational Inference techniques. For example, if $\cQ$ is a reparameterizable family of densities, then the reparameterization trick can be used in conjugation with gradient descent methods on the parameters of $s\in\cQ$. All the quantities necessary to compute $\widehat Q_t'$, $\gamma_t$, and $\bar v$, are estimated using Monte-Carlo.

\vspace{-2mm}
\section{Convergence Analysis}\label{sec:analysis}
\vspace{-.5mm}

\begin{figure*}[t]
\vspace{1mm}
  \centering
  {
  \begin{tabular}{lccc}
  & Train LL & Test AUROC & Time (in s)\\
  \midrule
  AdaAFW      & {\bf -0.669 $\pm$ 5.390e-04}& 0.787 $\pm$ 4.599e-03 & 48.240 $\pm$ 2.384e+01 \\
  AdaPFW      & -0.672 $\pm$ 6.340e-04& {\bf 0.791 $\pm$ 2.398e-03} & 107.247 $\pm$ 1.675e+02 \\
  AdaFW       & -0.671 $\pm$ 9.700e-04& 0.789 $\pm$ 7.985e-03 & 40.870 $\pm$ 1.647e+01 \\
  Predefined$^*$  & -0.676 $\pm$ 7.435e-04& 0.788 $\pm$ 7.401e-03 & {\bf 7.296 $\pm$ 1.777e+00} \\
  Line-search & {\bf -0.669 $\pm$ 1.011e-03} & {\bf 0.791 $\pm$ 7.514e-03} & 265.608 $\pm$ 1.655e+02 \\
  \end{tabular}
  \captionof{table}{
  \small
  Comparison of different step-size selection methods on \textsc{ChemReact} dataset. Adaptive variants have comparable AUROC values while taking less time and having less variance across multiple runs. (*) Predefined is the method proposed by \protect\cite{locatello2018boosting}.
  }\label{tab:blr}
  }
{
\begin{tabular}{lccc}
  & Train LL & Test AUROC & Time (in s)\\
  \midrule
  AdaAFW      & {\bf -0.169 $\pm$ 1.111e-03}& {\bf 0.859 $\pm$ 2.565e-03} & 74.634 $\pm$ 3.369e+01 \\
  AdaPFW      & -0.172 $\pm$ 1.361e-03& {\bf 0.857 $\pm$ 1.011e-03} & 149.721 $\pm$ 1.088e+02 \\
  AdaFW       & {\bf-0.170 $\pm$ 1.774e-03} & {\bf 0.859 $\pm$ 3.672e-03} & 54.334 $\pm$ 2.632e+01 \\
  Predefined$^*$  & -0.181 $\pm$ 2.983e-03& 0.853 $\pm$ 3.693e-03 & {\bf 21.369 $\pm$ 9.631e+00} \\
  Line-search & -0.181 $\pm$ 2.651e-03& 0.854 $\pm$ 3.473e-03 & 145.725 $\pm$ 1.347e+02 \\
\end{tabular}
  \captionof{table}{
  \small
  Comparison of different step-size selection methods on \textsc{eICU} dataset. Adaptive away-steps variant gives the best test AUROC as well as training log-likelihood. (*) Predefined is the method proposed by \protect\citet{locatello2018boosting}.} \label{tab:blr_eicu}
}
\end{figure*}

The following theorem shows a convergence rate for our algorithm. It extends the work of \citep{Pedregosa2018Jun} to the case of Variational Inference
\begin{theorem}\label{thm:convergence}
  Let $q_t$ be the $t$-th iterate generated by Algorithm~\ref{alg:main}. Let $\varepsilon_t = \frac{\varepsilon_0}{t^2}$ bound the Monte-Carlo approximation error, described in Equation~\eqref{eq:assumption}, with some initial approximation error $\varepsilon_0>0$.
  Let $\overline C_t \defeq \frac{1}{t}\sum_{i=0}^{t-1} C_t$ be the average of the curvature constant estimates.
  Then we have:
  \begin{align*}
    \dkl(q_t) - \dkl(q^\ast) \leq 
    \frac{4(1-\delta)}{t\delta(t\delta+1)}E_0
    + \frac{2\overline{C_t}}{\delta(t\delta + 1)}
    + \frac{2\varepsilon_0}{t}
  \end{align*}
  where $E_0 \defeq \dkl(q_0) - \psi(\nabla \dkl(q_0))$ is the initialization error ($\psi$ denotes the dual objective) and $\delta>0$ bounds the error of estimating the greedy subproblem defined in Equation~\eqref{eq:bbbvi}. See Appendix~\ref{app:proof} for the proof.
\end{theorem}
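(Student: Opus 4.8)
The plan is to run the standard Frank--Wolfe descent argument, but to carefully propagate the two extra error sources --- the inexact greedy oracle (quantified by $\delta$) and the Monte-Carlo offset $\varepsilon_t$ --- through a single primal-gap recursion, and then close the argument by induction on $t$. I would write $h_t \defeq \dkl(q_t) - \dkl(q^\ast)$ for the primal gap and $g_t \defeq -\innerp{\grad\dkl(q_t), d_t}$ for the directional progress, so that the closed-form step-size from Equation~\eqref{eq:ls_closedform} reads $\gamma_t = \min\{g_t/C_t, \gammamax_t\}$.

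First I would establish a per-iteration decrease. Because Algorithm~\ref{alg:approxback} returns a $C_t$ for which $\widehat Q_t'$ satisfies the approximate sufficient decrease condition, we have $\dkl(q_{t+1}) \leq \dkl(q_t) + \gamma_t\innerp{\grad\dkl(q_t), d_t} + \tfrac{C_t}{2}\gamma_t^2 + 2\varepsilon_t$. Substituting the closed-form $\gamma_t$ yields, in the interior case $\gamma_t = g_t/C_t$, the bound $h_{t+1} \leq h_t - \tfrac{g_t^2}{2C_t} + 2\varepsilon_t$, and in the clipped case $\gamma_t = \gammamax_t$ the bound $h_{t+1} \leq h_t - \tfrac{1}{2}\gammamax_t g_t + 2\varepsilon_t$ (using $C_t\gammamax_t < g_t$); both are handled uniformly since $\gamma_t$ minimizes the surrogate quadratic on $[0,\gammamax_t]$.

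Next I would convert directional progress into primal progress via the oracle quality and convexity. The inexact greedy step of Equation~\eqref{eq:bbbvi} guarantees $g_t \geq \delta\, w_t$, where $w_t \defeq \max_{s\in\cQ}\innerp{\grad\dkl(q_t), q_t - s}$ is the exact Frank--Wolfe gap, and convexity of $\dkl$ gives $w_t \geq h_t$. Combining with the decrease bound produces the master recursion $h_{t+1} \leq h_t - \tfrac{\delta^2 h_t^2}{2C_t} + 2\varepsilon_t$, together with its clipped analogue. This is the crux that ties $\delta$ to the effective iteration count $t\delta$ appearing in the statement.

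Finally I would solve the recursion by induction on $t$, plugging in $\varepsilon_t = \varepsilon_0/t^2$ and tracking how the per-step $C_t$ aggregate into the average $\overline{C_t}$. The induction hypothesis is exactly the claimed three-term bound; verifying the inductive step reduces to elementary but delicate estimates showing that (i) the $E_0$ term contracts at the stated $\tfrac{1}{t\delta(t\delta+1)}$ rate coming from the FW recursion with gap scaled by $\delta$, (ii) the summed curvatures telescope into a term of order $\overline{C_t}/(t\delta+1)$, and (iii) the summed offsets $\sum_i \varepsilon_0/i^2$ are controlled by $2\varepsilon_0/t$. The main obstacle will be (iii) together with forcing the induction to close simultaneously for all three terms: the quadratic recursion couples the additive $\varepsilon_t$ offset with the $h_t^2$ decrease, so one must keep the inductive bound tight enough that the $\varepsilon_t$ accumulation does not overwhelm the $\mathcal O(1/t)$ rate --- which is precisely why the schedule $\varepsilon_t = \mathcal O(1/t^2)$ is imposed. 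I would also separately dispatch the iterations in which the \textsc{imax} fallback triggers the predefined step $\gamma_t = \tfrac{2}{t+2}$, checking that they still obey the same recursion so that the unified bound continues to hold.
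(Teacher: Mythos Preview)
Your per-iteration bound is fine, but the master recursion you extract from it --- $h_{t+1}\le h_t-\tfrac{\delta^2 h_t^2}{2C_t}+2\varepsilon_t$ --- cannot deliver the theorem as stated. Two of the three terms in the claimed bound are structurally inaccessible from that recursion. First, $E_0=\dkl(q_0)-\psi(\nabla\dkl(q_0))$ is a \emph{dual} quantity, and a purely primal quadratic recursion in $h_t$ never sees $\psi$; the standard induction on $h_{t+1}\le h_t-c\,h_t^2$ in fact yields a bound independent of the initialization, so there is no mechanism for an $E_0$ term to appear, let alone with the factor $4(1-\delta)$. Second, in your recursion $C_t$ sits in the \emph{denominator}, so the per-step curvatures do not ``telescope into'' the arithmetic mean $\overline{C_t}$; an induction will force you to control $\max_i C_i$ (or a harmonic-type quantity), not $\tfrac{1}{t}\sum_i C_i$.

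The paper takes a different route that is tailored to produce exactly these constants. The key observation is that because $\gamma_t$ minimizes $\widehat Q_t'(\cdot,C_t)$ over $[0,\gammamax_t]$, the sufficient-decrease inequality holds for \emph{every} $\xi\in[0,1]$, not just the realized $\gamma_t$. The proof then plugs in the analytic choice $\xi_t=\tfrac{2}{\delta t+2}$ (never the actual step), combines it with the Frank--Wolfe gap inequality $\innerp{\nabla\dkl(q_t),q_t-s_t}\ge\delta\bigl(\dkl(q_t)-\psi(\nabla\dkl(q_t))\bigr)$, and tracks the auxiliary dual sequence $\sigma_{t+1}=(1-\xi_t\delta)\sigma_t+\xi_t\delta\,\psi(\nabla\dkl(q_t))$, $\sigma_0=\psi(\nabla\dkl(q_0))$. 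Subtracting $\sigma_{t+1}$ yields a \emph{linear} recursion which, after multiplying by the weights $a_t=\tfrac12((t-2)\delta+2)((t-1)\delta+2)$ chosen so that $a_{t+1}(1-\xi_t\delta)=a_t$, telescopes to $a_t(\dkl(q_t)-\sigma_t)\le a_0 E_0+\sum_{i<t}C_i+2\varepsilon_0\sum_{i<t}a_{i+1}/i^2$. Now the $C_i$ are additive and their sum is exactly $t\,\overline{C_t}$; the $E_0$ term comes from $\dkl(q_0)-\sigma_0$; and one finishes by showing $\sigma_t\le\psi(u_t)\le\dkl(q^\ast)$. If you want to recover the stated bound, you should abandon the quadratic $h_t^2$ recursion and instead exploit the freedom to pick $\xi_t$ in the surrogate, together with a dual estimate sequence.
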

It is clear that the Monte-Carlo estimates must be increasingly accurate at each iteration to guarantee convergence. This can be achieved by increasing the number of samples.

The average $\overline C_t$ will be kept much smaller than the global curvature $C_\mathcal{Q}$.
Previous results give rates in terms of global curvature \cite{locatello2018boosting,LocKhaGhoRat18,Guo:2016tg,campbell2019universal}. Without making additional assumptions, the global curvature is in principle unbounded. This explains why the number of iterations~$t$~must be large before observing the decrease in the error expected from prior work.

\subsection{Discussion}\label{sec:analysis_discussion}

The authors of \citet{locatello2018boosting,LocKhaGhoRat18} also provide a convergence rate for their algorithm. However, in practice this rate is never achieved. This is due to a dependence on the global curvature (Eq.~\eqref{def:Cf}) of the KL-divergence which can be huge in certain areas of $\conv(\cQ)$. Since the inner loop of the algorithm is essentially a full run of Variational Inference, it is impossible to run it enough times to beat the global curvature and decrease the initial error as~$\linearconvergence$.

We reduce the rate by focusing on the curvature constant which, as shown by Equation~\eqref{eq:ls_closedform}, is directly related to the estimation of the mixture weights. In practice, our approximate backtracking approach on the local curvature is a viable middle ground between exact line-search, which is expensive for the KL divergence, and predefined step-size which requires an infeasibly large number of iterations.

\begin{figure*}[ht]
  \vspace{-1.5mm}
  \includegraphics[width=0.33\linewidth]{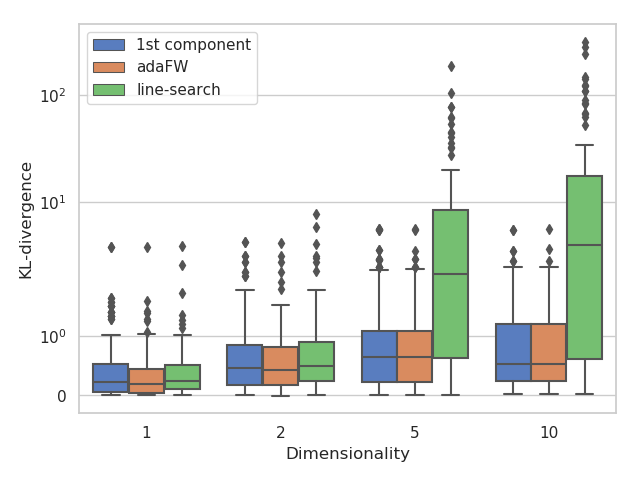}
  \includegraphics[width=0.66\linewidth]{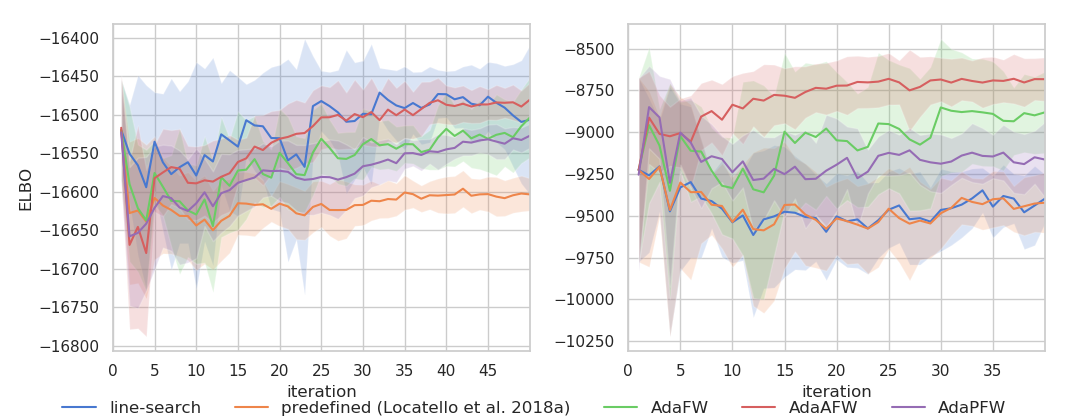}
  \captionof{figure}{\small (left) KL-Divergence of mixture to the target distribution for different step-size  variants with random LMO (lower is better). ELBO values vs Frank-Wolfe iteration for different step-size selection methods on Bayesian logistic regression task
  for \textsc{ChemReact} (center) and \textsc{eICU} (right) datasets (higher is better).
  Solid lines are mean values and shaded regions
  are standard deviations over different parameter configurations and 10 replicates.
  In both the cases, adaptive variants achieve higher ELBO and are more stable than line-search.
 }
\label{fig:rand_lmo_and_blr_elbo}
\end{figure*}

\begin{figure}[ht]
    \vspace{-5mm}
    \centering
    \includegraphics[width=.75\linewidth]{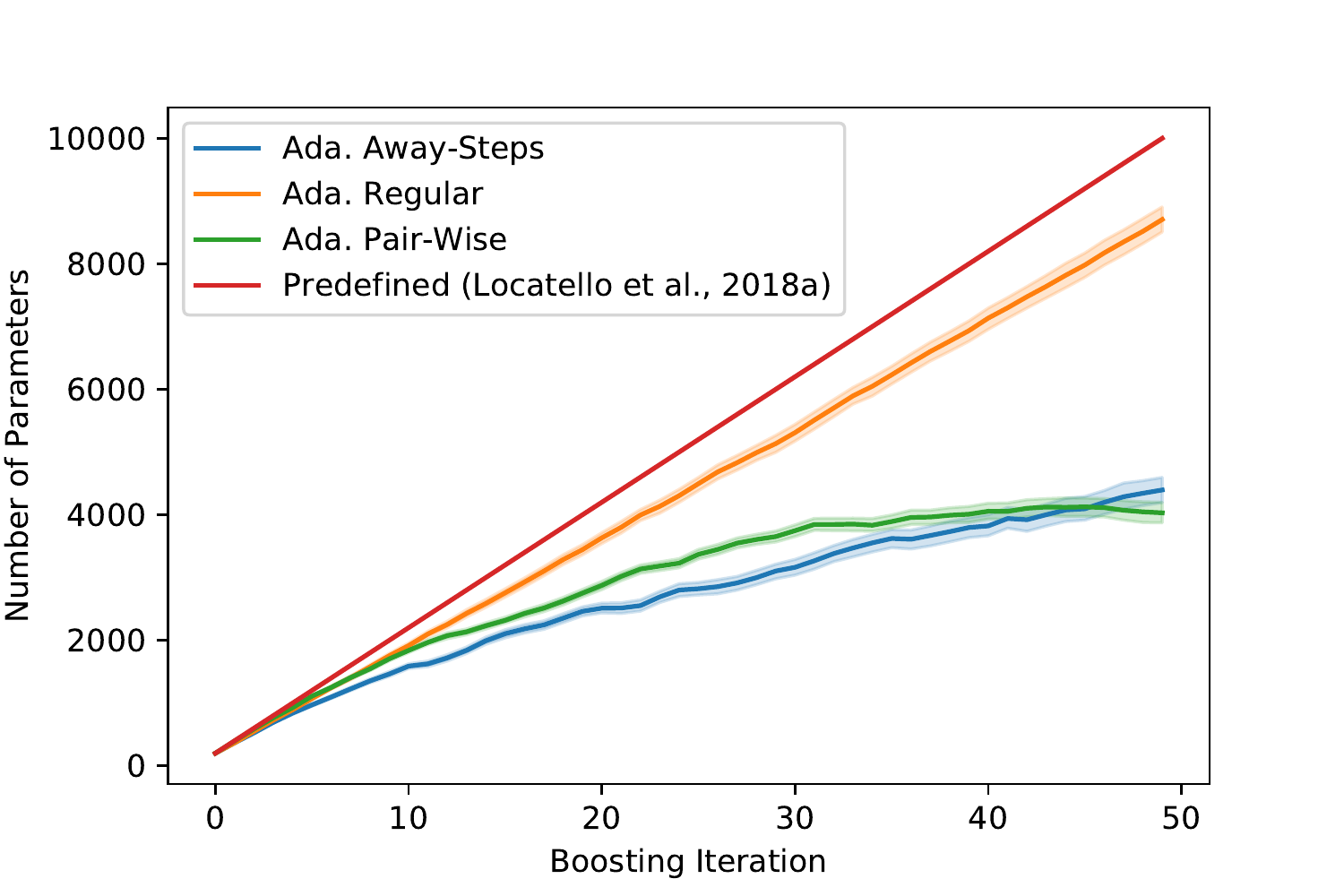}
    \caption{\small Comparing the number of parameters per iteration to previous work on \textsc{ChemReact}.
    }\label{fig:nparams}
    \label{fig:memusage_chemreact}
\end{figure}

Recognize that the backtracking approach of \citet{Pedregosa2018Jun} cannot be applied directly to the VI setting because~$Q_t$ cannot be computed exactly. 
Overestimation of~$Q_t$~due to approximation errors results in slower convergence.
However, if $C_t$ is underestimated, then the step-size will be overestimated which breaks the convergence proof of~\citet{Pedregosa2018Jun}. By introducing the parameter $\varepsilon_t$, we can provide some control over this underestimation problem as well as fully characterize the theoretical consequences.

\textbf{Limitations:}
Determining the number of samples needed to satisfy the assumption of Section \ref{sec:local-boosting}, namely, the number of samples needed to guarantee the convergence rate provided in Theorem \ref{thm:convergence}, is a long-standing open problem. There is work on this subject ranging from pure optimization \cite{paquette_stochastic_2018} to Importance Weighted Autoencoders \cite{burda2015importance}. %
In this paper, our goal is to characterize the problem theoretically in terms of convergence rates. We believe that this clarifies the ongoing gap between theory and practice. Despite this gap, we found empirically that \texttt{100} Monte-Carlo samples was sufficient. 

\section{Empirical validation}\label{sec:experiments}

\subsection{Instability of line-search}\label{subsec:syn_data}
\markerlessfootnote{Source code to reproduce our experiments is available here: \url{https://github.com/ratschlab/adaptive-stepsize-boosting-bbvi}}
We tested the behavior of gradient-based line-search methods in a well-understood synthetic setup
in which the target posterior is a mixture of two Gaussians.
We set the first variational component to be the first component of the target density
assuming a perfect first iteration.
We then compare the performance of step $t=1$ of the algorithm, with that of line-search and our adaptive method at step $t=2$, over multiple runs.
To measure overall performance, we use (approximate) KL-divergence to the true posterior.

Line-search becomes unstable as the dimensionality increases. This is because line-search oscillates between the two extreme values of zero and one, regardless of the quality of the next component. The quality of the next random component decreases with the dimensionality due to the curse of dimensionality. Even in medium dimensional cases, line-search is unable to distinguish between these cases. Results are summarized in Figure~\ref{fig:rand_lmo_and_blr_elbo} (left).

\subsection{Bayesian logistic regression} \label{subsec:blr}

We consider two real-world binary-classification tasks: predicting
the reactivity of a chemical and predicting mortality in the intensive care unit~(ICU).
For both tasks we use Bayesian logistic regression.
Bayesian logistic regression is a conditional prediction model with prior~$p(\bw)~=~\mathcal{N}(0,1)$~and conditional likelihood $p(\by|\bX) = \text{Bernoulli}(p = \text{sigmoid}(\bX^\top\bw))$.
This model is commonly used as an example of a simple model which does not have a closed-form posterior~\citep{Blei2017review}.
We set the base family to be the Laplace distributions following \citet{locatello2018boosting}.
\smallskip

\textbf{Chemical reactivity:}
For this task we used the \textsc{ChemReact}\footnote{http://komarix.org/ac/ds/} dataset which contains 26,733 chemicals, each with~100~features.
We ran our algorithm multiple times for 50 iterations.
We selected the iterate with the best median train log-likelihood over 10 replicates in the first 50 boosting iterations (Table~\ref{tab:blr}).
\smallskip

\textbf{Mortality prediction:}
For this task we used the \textsc{eICU Collaborative Research} database~\citep{Goldbergere215}.
Following \citet{fortuin2018som}, we selected a subset of the data with 71,366 patient stays and 70 relevant features ranging from age and gender to lab test results. We ran boosting for 40 iterations and, for each algorithm, chose the iteration which gave best median train log-likelihood over 10 replicates (Tab.~\ref{tab:blr_eicu}).

In both datasets, we observe that adaptive variants achieve a better ELBO and are more stable than line-search (Fig.~\ref{fig:rand_lmo_and_blr_elbo}~(center and right)). This results in better AUROC and train log-likelihood (Tab.~\ref{tab:blr},~\ref{tab:blr_eicu}).

Naturally, predefined step-size is the fastest since it simply sets the step-size to $2/(t+2)$. But, this results in suboptimal performance and unnecessarily large variational approximations. Our approach results in at least a 2x speed-up over line-search as well as better performance.

Adaptive variants are also faster than line-search (Tables~\ref{tab:blr}~and~\ref{tab:blr_eicu}). Step-size adaptivity is only 2-5 times slower than predefined step-size as opposed to line-search which is 7-39 times slower. Overall, we observe that step-size adaptivity is faster, more stable, and yields more accurate models than line-search. 

\subsection{Memory efficiency}

Our corrective methods discussed in Section~\ref{sec:corrective} not only yield superior solutions in terms of accuracy, but also yield more parsimonious models by removing previously selected components.
Figure~\ref{fig:nparams} demonstrates this behavior on the \textsc{ChemReact} dataset.
Appendix~\ref{app:results} has similar results on the \textsc{eICU} dataset.

\section{Conclusion}

In this paper, we traced the limitations of state-of-the-art boosting variational inference methods back to the global curvature of the KL-divergence. We characterized how the global curvature directly impacts both time and memory consumption in practice, addressed this problem using the notion of local curvature, and provided a novel approximate backtracking algorithm for estimating the local curvature efficiently. Our convergence rates not only provide theoretical guarantees, they also clearly highlight the trade-offs inherent to boosting variational inference.
Empirically, our method enjoys improved performance over line-search while requiring significantly less memory consumption than a predefined step-size.

Applying this work to more complex inference problems such as Latent Dirichlet Analysis (LDA) is a promising direction for future work.
\newpage

\bibliographystyle{named}
\bibliography{bibliography}

\onecolumn
\appendix

\section{Corrective Variants}\label{app:fwvariants}

In the so-called \emph{away-steps} variant, the algorithm can decide to reduce the weight of a component rather than add a new one if that yields more progress. 
If $\innerp{\ln\frac{p_X}{q_t}, s_t - q_t}\geq \innerp{\ln\frac{p_X}{q_t}, q_t - \overline{v}}$ then the update direction ($v_t$ at line~(5) of Alg.~\ref{alg:main}) is set to $q_t$. This is a normal update where we add a new component to the mixture. If $q_t - \overline{v}$ is more aligned with the negative gradient, then the algorithm performs a \emph{downweighting} step and sets $v_t = s_t - q_t + \overline{v}$ thereby resulting in the desired proposal density, $q_t - \overline{v}$, when being subtracted from $s_t$ (line 6, Alg.~\ref{alg:main}). In this case, no new component is added and we simply decrease the weight of an existing component.

The so-called \emph{pair-wise} variant consistently sets the update direction as $v_t = \overline{v}$.

For both away-steps and pair-wise, we need to ensure that the weights never become negative.
In the case of a downweighing step for away-steps, we set $\gammamax_t = \alpha_{\overline{v}}/(1-\alpha_{\overline{v}})$ where $\alpha_{\overline{v}}$ is the weight corresponding to $\overline{v}$.
Otherwise, away-steps performs a normal update and $\gammamax_t=1$.
For pair-wise, weight is shifted from $\overline{v}$ to $s_t$. Thus, the maximum weight that can be shifted is $\gammamax_t = \alpha_{\overline{v}}$.

\section{Proofs}\label{app:proof}

\subsection{Definitions}
\begin{enumerate}
\item
  $\D\subset \mathcal H$ a convex set
\item
  $\psi(u) \defeq -\dual{f}(u) - \dual{\indicator_\D}(-u)$, the dual objective where,
  \begin{enumerate}
  \item
    $\indicator_\D(u) \defeq 0$ when $u\in\domain$ and $\indicator_\D(u) \defeq +\infty$ when $x\notin\domain$
  \item
    $\dual{f}(u) := \max_{s\in\mathcal H} \innerp{u,s} - f(s)$
  \end{enumerate}
\item
  $h_t \defeq f(q_t) - f(q^{*})$, the suboptimality at step $t$
\end{enumerate}

\subsection{Known facts}

Here we repeat a number of known facts that we use in the proof. Proofs of these results can be found in standard textbooks such as \citet{nesterov2018lectures,Boyd:2004uz}.

\begin{remark}\label{remark:dualitygap} Frank-Wolfe gap (c.f.\ Lemma 8 of \citet{Pedregosa2018Jun})
  \begin{equation}
    \innerp{\nabla f(q_t), x_t - s}
    \geq \delta (f(q_t) - \psi(\nabla f(q_t)))
  \end{equation}
\end{remark}

\hide{
\begin{proof}
  \begin{align}
    f(q_t) - \psi(\nabla f(q_t))
    &= f(q_t) - (-\dual{f}(\nabla f(q_t)) - \dual{\indicator_\D}(-\nabla f(q_t))\\
    &= f(q_t) + \dual{f}(\nabla f(q_t)) + \dual{\indicator_\D}(-\nabla f(q_t)\\
    &= \innerp{q_t, \nabla f(q_t)} + \max_{s\in\mathcal H} \{ \innerp{-\nabla f(q_t), s} - \indicator_\D(s) \}\label{eq:22}\\
    &= \innerp{q_t, \nabla f(q_t)} + \max_{s\in\D} \{ \innerp{-\nabla f(q_t), s} \}\\
    &= \max_{s\in\D}\innerp{\nabla f(q_t), q_t - s} \\
    &= g_t
  \end{align}
  Equation~\eqref{eq:22} follows from

  \begin{equation}
    f(q_t) + \max_{s\in\mathcal H}\innerp{\nabla f(q_t), s} - f(s) = f(q_t) + (\innerp{\nabla f(q_t), q_t} - f(q_t)) = \innerp{\nabla f(q_t), q_t}
  \end{equation}
\end{proof}
}

\begin{remark}
  Zero duality gap. Namely,
\begin{equation}
  \min_{q\in\D} f(q) = \max_{u\in\mathcal H} \psi(u)
\end{equation}
\end{remark}

\hide{
\begin{proof}
  \begin{align}
    \min_{q\in\mathcal D} f(q)
    &= \min_{q\in\mathcal H} f(q) + \indicator_D(q)\\
    &= \min_{q\in\mathcal H} \max_{u\in\mathcal H} \innerp{q,u} - \dual{f}(u) + \indicator_D(q)\label{eq:dualdual}\\
    &= \max_{u\in\mathcal H} \min_{q\in\mathcal H} \innerp{q,u} - \dual{f}(u) + \indicator_D(q)\label{eq:sion}\\
    &= \max_{u\in\mathcal H} \max_{q\in\mathcal H} \innerp{-q,u} + \indicator_\D(-q) - \dual{f}(u)\\
    &= \max_{u\in\mathcal H} \max_{q\in\mathcal H} \innerp{q,-u} + \indicator_\D(-q) - \dual{f}(u)\\
    &= \max_{u\in\mathcal H} -\dual{\indicator_\D}(-u) - \dual{f}(u)\\
    &= \max_{u\in\mathcal H} \psi(u)
  \end{align}
Equation \eqref{eq:dualdual} follows from $f^{**} = f$.
Equation \eqref{eq:sion} follows from Sion's minimax theorem.
\todogideon{Show that $f^{**} = f$.}
\todogideon{Explain why Sion's minimax applies here. Need to show convexity/concavity of the functions.}
\end{proof}
}

\begin{remark}
  The previous remark implies that
  \begin{equation}
    h_t \defeq f(q) - f(q^{*}) \leq f(q) - \psi(u) \quad\forall u\in\mathcal H
  \end{equation}
\end{remark}

\begin{remark}
  $\CfQ$ is bounded if the parameter space of $\overline{Q}$ is bounded (c.f.\ Thm.\ 2 of \citet{locatello2018boosting})
\end{remark}

\subsection{Consequences of Boundedness on Monte-Carlo Errors Assumption}\label{sec:assumption-consequences}

For convenience, we repeat the assumption stated in Equation~\eqref{eq:assumption}: for all $\varepsilon > 0$ there exists an $n\in\mathbb N$ such that for all $s\in\mathcal Q$ and $q\in\conv(\mathcal Q)$,
\begin{equation}
    | \innerp{s,\grad\dkl(q)} - \reallywidehat{\innerp{s,\grad\dkl(q)}}| \leq \varepsilon
\end{equation}

We now state a couple consequences of this assumption as lemmas.

\begin{lemma}\label{lemma:finite_mixture_bound}
For any finite mixture $q_t = \sum_{i=1}^t \alpha_i s_i$, where $\alpha$ is in the $t$-simplex, i.e.\ $\sum_{i=1}^t \alpha_i = 1$ and $\alpha_i\geq 0$, and $s_i\in\mathcal Q$,
\begin{equation}
|\dkl(q_t) - \widehat\dkl(q_t)| \leq \varepsilon
\end{equation}
\end{lemma}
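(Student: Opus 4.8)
The plan is to reduce $\dkl(q_t)$ to a convex combination of exactly the quantities $\innerp{s_i,\grad\dkl(q_t)}$ that the assumption of Equation~\eqref{eq:assumption} controls, so that the per-term error bound transfers through the triangle inequality. The whole argument hinges on a single functional-gradient identity, after which only convexity of the weights is needed.

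First I would compute the functional derivative of the KL-divergence. Since $\dkl(q)=\int q(z)\ln\frac{q(z)}{p_X(z)}\,dz$, a standard variation gives $\grad\dkl(q)(z)=\ln\frac{q(z)}{p_X(z)}+1$. Pairing this with any base density $s\in\cQ$ and using $\int s = 1$ yields $\innerp{s,\grad\dkl(q)}=\Exp_s\!\big[\ln\tfrac{q}{p_X}\big]+1$. Evaluating at $q=q_t$ with $s=s_i$, taking the $\alpha$-weighted sum, and invoking the mixture identity $\sum_i\alpha_i\Exp_{s_i}[g]=\Exp_{q_t}[g]$ (valid because $q_t=\sum_i\alpha_i s_i$) collapses the combination into a single expectation under the mixture:
\begin{equation*}
\sum_{i=1}^{t}\alpha_i\innerp{s_i,\grad\dkl(q_t)}=\Exp_{q_t}\!\Big[\ln\tfrac{q_t}{p_X}\Big]+1=\dkl(q_t)+1 .
\end{equation*}
Hence $\dkl(q_t)=\sum_i\alpha_i\innerp{s_i,\grad\dkl(q_t)}-1$, and the Monte-Carlo estimate $\widehat\dkl(q_t)$ is the same expression with each $\innerp{s_i,\grad\dkl(q_t)}$ replaced by its sample estimate $\reallywidehat{\innerp{s_i,\grad\dkl(q_t)}}$, so that the additive constant cancels in the difference.

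With this decomposition in place, the bound follows immediately. Since each $s_i\in\cQ$ and $q_t\in\conv(\cQ)$, the assumption of Equation~\eqref{eq:assumption} applies to every term, and by the triangle inequality together with $\alpha_i\geq 0$ and $\sum_i\alpha_i=1$,
\begin{equation*}
|\dkl(q_t)-\widehat\dkl(q_t)|=\Big|\sum_i\alpha_i\big(\innerp{s_i,\grad\dkl(q_t)}-\reallywidehat{\innerp{s_i,\grad\dkl(q_t)}}\big)\Big|\leq\sum_i\alpha_i\,\varepsilon=\varepsilon .
\end{equation*}

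The main obstacle is entirely in the first step: pinning down the functional gradient of $\dkl$ and, more importantly, arguing that the sample estimator $\widehat\dkl(q_t)$ is constructed through exactly this linear decomposition into gradient inner products, so that the only source of error is the per-component discrepancy the assumption bounds. Once that is settled, the rest is a routine triangle-inequality-plus-convexity argument, and the crucial structural point is simply that the decomposition pairs base densities $s_i\in\cQ$ with a gradient evaluated at a point $q_t\in\conv(\cQ)$ — precisely the regime for which the assumption is stated.
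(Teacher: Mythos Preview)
Your proposal is correct and takes essentially the same approach as the paper: express $\dkl(q_t)$ linearly in the terms $\innerp{s_i,\grad\dkl(q_t)}$, argue that the Monte-Carlo estimator decomposes the same way, and then apply the triangle inequality together with the per-term assumption and $\sum_i\alpha_i=1$. The only cosmetic difference is that the paper adopts the convention $\grad\dkl(q)=\ln(q/p_X)$ (so that $\dkl(q_t)=\innerp{q_t,\grad\dkl(q_t)}$ holds on the nose), whereas you compute the unconstrained functional gradient with the extra $+1$ and let that constant cancel in the difference; the argument is otherwise line-for-line the same.
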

\begin{proof}
First, note that 
\begin{equation}
    \dkl(q_t) = \innerp{q_t,\grad \dkl(q_t)}
\end{equation}
Next, let $\alpha \in (0,1)$. Then,
\begin{equation}
    | \innerp{\alpha s, \grad\dkl(q)} - \reallywidehat{\innerp{\alpha s, \grad\dkl(q)}} |
     = \alpha
    | \innerp{s, \grad\dkl(q)} - \reallywidehat{\innerp{s, \grad\dkl(q)}} | \leq \alpha\varepsilon
\end{equation}

Now it is convenient to make a slight abuse of notation. Let $\widehat\dkl(q_t) = \frac{1}{n}\sum_{i=1}^n \frac{q_t(\sigma_i)}{p_X(\sigma_i)}$ where $\sigma_i\sim q_t$.
\begin{align}
    \widehat\dkl(q_t) &= \reallywidehat{\innerp{q_t,\grad\dkl(q_t)}}\\
    &=
    \reallywidehat{\innerp{\sum_{i=1}^t \alpha_i s_i,\grad\dkl(q_t)}}\\
    &=
    \sum_{i=1}^t \alpha_i \reallywidehat{\innerp{s_i,\grad\dkl(q_t)}}
\end{align}
The last equality that the samples from the mixture $q_t$ can divided into the samples that come from each of its components. Also, the number of samples is different for each approximation, 
$\reallywidehat{\innerp{s_i,\grad\dkl(q_t)}} = \sum_{j=1}^{n_i}\frac{q_t(\sigma_{i,j})}{p_X(\sigma_{i,j})}$ where $\sigma_{i,j}\sim s_i$ and $\sum_{i=1}^n n_i = n$. With this abuse of notation, it is easy to express the following

\begin{align}
|\dkl(q_t) - \widehat\dkl(q_t)| &= 
|\innerp{q_t,\grad\dkl(q_t)} - \reallywidehat{\innerp{q_t,\grad\dkl(q_t)}}|\\
&= | \sum_{i=1}^t \alpha_i(\innerp{s_i,\grad\dkl(q_t)} - 
\reallywidehat{\innerp{s_i,\grad\dkl(q_t)}}) | \\
&\leq
 \sum_{i=1}^t \alpha_i|\innerp{s_i,\grad\dkl(q_t)} - 
\reallywidehat{\innerp{s_i,\grad\dkl(q_t)}} | \\
&\leq \sum_{i=1}^t \alpha_i \varepsilon = \varepsilon
\end{align}

The first inequality is the triangle inequality. The second follows from our key assumption (see above).

\end{proof}

\begin{lemma}
Let $Q_t$ and $\widehat Q_t$ be as defined in Equations~\ref{eq:defQt}~and~\ref{eq:defQt-hat} in Section~\ref{sec:local-boosting}. Then,
  \begin{align}
    |Q_t(\gamma, C) - \widehat Q_t(\gamma,C)| &= 
    (1-\gamma)(\dkl(q_t) - \widehat\dkl(q_t))
    + \gamma(
    \innerp{\grad\dkl(q_t),s_t}
    - \reallywidehat{\innerp{\grad\dkl(q_t),s_t}}
    )
    + 0\\
    &\leq
    (1-\gamma) \varepsilon + \gamma \varepsilon = \varepsilon
\end{align}
\end{lemma}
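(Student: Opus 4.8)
The plan is to reduce the stochastic content of $Q_t$ to exactly the two quantities that have already been controlled — a finite-mixture divergence estimate and a single inner-product estimate — and then bound each by $\varepsilon$ with a convexity argument. First I would rewrite $Q_t(\gamma,C)$ as a convex combination of inner products. Starting from the definition in Equation~\eqref{eq:defQt} in the basic case $d_t = s_t - q_t$, I would substitute the identity $\dkl(q_t) = \innerp{q_t,\grad\dkl(q_t)}$ (the same identity used at the start of the proof of Lemma~\ref{lemma:finite_mixture_bound}) to obtain
\begin{equation*}
Q_t(\gamma,C) = (1-\gamma)\dkl(q_t) + \gamma\innerp{s_t,\grad\dkl(q_t)} + \frac{C\gamma^2}{2}.
\end{equation*}
The crucial structural observation is that only the first two terms carry Monte-Carlo error, whereas the curvature term $C\gamma^2/2$ is deterministic; it therefore cancels exactly upon subtracting $\widehat Q_t$, which is the origin of the ``$+0$'' appearing in the statement.

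Next I would form the difference $Q_t - \widehat Q_t$ termwise. Since $\widehat Q_t$ replaces $\dkl(q_t)$ by $\widehat\dkl(q_t)$ and $\innerp{s_t,\grad\dkl(q_t)}$ by its estimate while leaving the quadratic term untouched, the difference collapses to
\begin{equation*}
(1-\gamma)\bigl(\dkl(q_t)-\widehat\dkl(q_t)\bigr) + \gamma\bigl(\innerp{s_t,\grad\dkl(q_t)} - \reallywidehat{\innerp{s_t,\grad\dkl(q_t)}}\bigr),
\end{equation*}
which reproduces the first displayed line of the lemma.

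Finally I would apply the triangle inequality and bound each piece separately. Because $\gamma\in[0,1]$, the coefficients $1-\gamma$ and $\gamma$ are nonnegative and can be pulled outside the absolute values. The term $|\dkl(q_t)-\widehat\dkl(q_t)|$ is bounded by $\varepsilon$ directly via Lemma~\ref{lemma:finite_mixture_bound}, and $|\innerp{s_t,\grad\dkl(q_t)} - \reallywidehat{\innerp{s_t,\grad\dkl(q_t)}}|$ is bounded by $\varepsilon$ via the core assumption in Equation~\eqref{eq:assumption}. Combining these gives $|Q_t-\widehat Q_t| \leq (1-\gamma)\varepsilon + \gamma\varepsilon = \varepsilon$, since the two weights sum to one.

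The only subtle point — rather than a genuine obstacle — is the very first rewriting step. One must recognize that $Q_t$, despite containing the raw quantity $\dkl(q_t)$ instead of an inner product, can be re-expressed so that both of its stochastic terms are exactly of the form the earlier assumption and lemma already bound, and must simultaneously notice that the $C\gamma^2/2$ term contributes no estimation error at all. Once this decomposition into a convexity-weighted sum of controlled errors is in place, everything that follows is a routine triangle-inequality estimate.
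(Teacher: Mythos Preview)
Your proposal is correct and follows essentially the same route as the paper: rewrite $Q_t$ via the identity $\dkl(q_t)=\innerp{q_t,\grad\dkl(q_t)}$ into the convex combination $(1-\gamma)\dkl(q_t)+\gamma\innerp{s_t,\grad\dkl(q_t)}+\tfrac{C\gamma^2}{2}$, cancel the deterministic quadratic term, and then bound the two remaining pieces by Lemma~\ref{lemma:finite_mixture_bound} and the assumption in Equation~\eqref{eq:assumption} respectively. Your explicit invocation of the triangle inequality is in fact slightly more careful than the paper's presentation, which writes the first line as an equality for the absolute value.
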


\begin{proof}
Observe,
\begin{align}
    Q_t(\gamma,C) &= \dkl(q_t) + \gamma \innerp{\grad \dkl(q_t), s_t - q_t} + \frac{C}{2}\gamma^2\\
    &= 
    \dkl(q_t) + \gamma \innerp{\grad\dkl(q_t),s_t} - \gamma \innerp{\grad\dkl(q_t),q_t} + \frac{C}{2}\gamma^2\\
    &= (1-\gamma)\dkl(q_t) + \gamma\innerp{\grad\dkl(q_t),s_t} + \frac{C}{2}\gamma^2
\end{align}
Thus, 
\begin{align}
    |Q_t(\gamma, C) - \widehat Q_t(\gamma,C)| &= 
    (1-\gamma)(\dkl(q_t) - \widehat\dkl(q_t))
    + \gamma(
    \innerp{\grad\dkl(q_t),s_t}
    - \reallywidehat{\innerp{\grad\dkl(q_t),s_t}}
    )
    + 0\\
    &\leq
    (1-\gamma) \varepsilon + \gamma \varepsilon = \varepsilon
\end{align}

The inequality follows from Lemma~\ref{lemma:finite_mixture_bound}.
\end{proof}

\subsection{Approximate backtracking}

\begin{lemma}\label{lemma:optgammat}
  For $q_{t+1} = q_t + \gamma_t d_t$ with $\gamma_t$ given by approximate backtracking and $d_t \defeq s_t - v_t$, $v_t$ is defined in Algorithm~\ref{alg:main},
  we have
\begin{equation}
  \dkl(q_{t+1}) \leq \widehat Q_t'(\xi, C_t) \quad\forall \xi\in [0, 1]
\end{equation}
\end{lemma}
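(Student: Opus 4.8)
The plan is to read off the result by chaining three ingredients, in order: (i) $\gamma_t$ is the exact minimizer of the convex quadratic surrogate $\gamma \mapsto \widehat Q_t'(\gamma, C_t)$, which produces the universal quantifier over $\xi$; (ii) the backtracking loop of Algorithm~\ref{alg:approxback} terminates (via its adaptive branch) only once its guard fails, which certifies the approximate sufficient decrease at $\gamma_t$; and (iii) Lemma~\ref{lemma:finite_mixture_bound} bridges the Monte-Carlo estimate $\widehat\dkl(q_{t+1})$ back to the true $\dkl(q_{t+1})$.

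For (i), I would note that $\widehat Q_t'(\gamma, C_t)$, as defined in Equation~\eqref{eq:defQt-hat}, is quadratic in $\gamma$ with strictly positive leading coefficient $C_t/2$, hence convex; its constrained minimizer over the feasible interval is exactly the clipped step $\gamma_t$ of Equation~\eqref{eq:ls_closedform}. Therefore $\widehat Q_t'(\gamma_t, C_t) \le \widehat Q_t'(\xi, C_t)$ for every feasible $\xi$. In the vanilla update $\gammamax_t = 1$, so this holds for all $\xi \in [0,1]$ as stated; for the corrective variants where $\gammamax_t < 1$ the inequality holds on $[0,\gammamax_t]$, which is the only range the convergence recursion actually exercises.

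For (ii) and (iii), the exit of the \textbf{while} loop gives $\widehat\dkl(q_{t+1}) = \widehat\dkl(q_t + \gamma_t d_t) \le \widehat Q_t'(\gamma_t, C_t)$. Since $\gamma_t \le \gammamax_t$ keeps all mixture weights nonnegative, $q_{t+1}$ is a genuine finite mixture in $\conv(\mathcal Q)$, so Lemma~\ref{lemma:finite_mixture_bound} applies and controls the discrepancy $\dkl(q_{t+1}) - \widehat\dkl(q_{t+1})$ by the tolerance $\varepsilon_t$. Composing the three inequalities yields $\dkl(q_{t+1}) \le \widehat Q_t'(\xi, C_t)$ for all feasible $\xi$.

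The main obstacle is the careful bookkeeping of the approximation error: the $2\varepsilon_t$ offset deliberately built into $\widehat Q_t'$ must be shown to simultaneously absorb the Monte-Carlo gap between $\dkl(q_{t+1})$ and $\widehat\dkl(q_{t+1})$ and the gap between the true surrogate $Q_t$ and its estimate (the latter quantified by the preceding lemma), so that no residual $\varepsilon_t$ leaks onto the right-hand side. I would also flag the \textsc{imax} escape branch of Algorithm~\ref{alg:approxback}, in which $\gamma_t$ is set to the predefined $2/(t+2)$ without certifying the decrease condition; the statement should be read as covering the adaptive branch, with the predefined fallback accounted for separately in the convergence argument.
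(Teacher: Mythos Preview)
Your proposal is correct and follows the same two-step skeleton as the paper's proof: optimality of $\gamma_t$ over the quadratic surrogate gives $\widehat Q_t'(\gamma_t,C_t)\le \widehat Q_t'(\xi,C_t)$ for all feasible $\xi$, and this is chained with the sufficient-decrease inequality $\dkl(q_{t+1})\le \widehat Q_t'(\gamma_t,C_t)$ certified by the backtracking exit. The paper's proof is terse (it writes only the optimality step and then ``combining these two inequalities''), whereas you spell out explicitly how the $2\varepsilon_t$ buffer absorbs the Monte-Carlo gap via Lemma~\ref{lemma:finite_mixture_bound}, and you correctly flag both the $[0,\gammamax_t]$ versus $[0,1]$ discrepancy and the \textsc{imax} fallback branch---points the paper glosses over.
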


\begin{proof}
  By the optimality of $\gamma_t$ we have that
  \begin{equation}
    \widehat Q_t'(\gamma_t, C_t)
    \leq
    \widehat Q_t'(\xi, C_t)
    \quad\forall\xi\in[0,\gammamax_t]
  \end{equation}
  Combining these two inequalities proves the lemma.
\end{proof}

\begin{proposition}[The curvature estimate is upper bounded]\label{prop:boundedLt}
  If the curvature estimate is initialized such that $C_{-1} \leq \Cf $, then for all $t$ we have that
\begin{equation}
    C_t\leq \tau \Cf~.
\end{equation}
\end{proposition}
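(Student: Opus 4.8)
The plan is to argue by induction on $t$, the central point being that the backtracking loop in Algorithm~\ref{alg:approxback} can never drive the curvature estimate past $\tau\Cf$: any trial value $C \ge \Cf$ already satisfies the approximate sufficient-decrease condition, so the loop terminates before $C$ can grow larger than $\tau\Cf$. The base case is immediate: by hypothesis $C_{-1} \le \Cf$, and since $\tau > 1$ this gives $C_{-1} \le \tau\Cf$.

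The crux is a monotonicity lemma: \emph{whenever a trial curvature satisfies $C \ge \Cf$, the \textsc{while}-condition of Algorithm~\ref{alg:approxback} is false (the decrease condition holds), and the loop exits.} I would prove this by chaining three inequalities. First, using the Bregman identity $\dkl(y\,\|\,q) = \dkl(y) - \dkl(q) - \innerp{\grad\dkl(q), y - q}$ with $y = q_t + \gamma d_t$, the definition of global curvature in Equation~\eqref{def:Cf} yields the exact sufficient decrease $\dkl(q_t + \gamma d_t) \le Q_t(\gamma, C)$ for every $\gamma\in[0,1]$ and every $C \ge \Cf$. Second, because $\widehat Q_t'$ carries the $+2\varepsilon_t$ offset of Equation~\eqref{eq:defQt-hat} while the Monte-Carlo error on $Q_t$ is at most $\varepsilon$, taking enough samples so that $\varepsilon \le \varepsilon_t$ (permitted by the Assumption of Equation~\eqref{eq:assumption}) gives $Q_t(\gamma, C) \le \widehat Q_t'(\gamma, C) - \varepsilon$. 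Third, the same Assumption gives $\widehat\dkl(q_t + \gamma d_t) \le \dkl(q_t + \gamma d_t) + \varepsilon$. Composing these yields $\widehat\dkl(q_t + \gamma d_t) \le \widehat Q_t'(\gamma, C)$, i.e.\ the \textsc{while}-condition fails.

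Given the lemma, the inductive step examines the returned value $C_t$, which is the warm-started estimate multiplied by $\tau$ a number $k \ge 0$ of times. If $k = 0$ the loop body is never entered, so $C_t$ equals the warm-started value; provided the $\eta$-rescaling is non-increasing (so that the warm start is at most $C_{t-1}$), the inductive hypothesis $C_{t-1} \le \tau\Cf$ forces $C_t \le \tau\Cf$. If $k \ge 1$, the value $C_t/\tau$ was tested and rejected, so by the contrapositive of the lemma $C_t/\tau < \Cf$, whence $C_t < \tau\Cf$. The early-termination branch (lines 6--8) is covered identically: the $C$ returned there is one for which the decrease condition had just failed, so $C < \Cf \le \tau\Cf$. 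This closes the induction.

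The main obstacle I anticipate is the bookkeeping inside the monotonicity lemma: making precise that the approximate condition inherits validity from the exact one. This is exactly where the $2\varepsilon_t$ offset must dominate the two one-sided Monte-Carlo errors (the $\varepsilon$ on $\widehat\dkl$ and the $\varepsilon$ on $\widehat Q_t$), which is why the Assumption of Equation~\eqref{eq:assumption} has to be invoked with $\varepsilon \le \varepsilon_t$. A secondary subtlety is that the direction in the decrease condition is the corrective direction $d_t = s_t - v_t$ rather than the $s - q$ appearing in Equation~\eqref{def:Cf}; I would either specialize to the Frank--Wolfe case $v_t = q_t$ (where the two coincide) or note that $\Cf$ is taken to upper-bound the curvature along every direction produced by the corrective-components routine.
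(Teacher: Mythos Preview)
Your argument is correct and rests on the same observation as the paper's: once a trial value satisfies $C \geq \Cf$, the (approximate) sufficient-decrease condition must hold, so backtracking halts before $C$ can exceed $\tau\Cf$. The paper compresses this into a single paragraph and does not spell out the role of the $2\varepsilon_t$ offset, the warm-start rescaling, or the early-termination branch, so your treatment is in fact more thorough than the published proof.
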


\begin{proof}
  The curvature estimate increases by a factor of $\tau$ each time the sufficient condition is violated, hence the only way that a curvature estimate $C$ could be larger than $\tau \Cf$ is if $C \geq \Cf$ and did not verify the sufficient decrease condition, which is impossible by the definition of sufficient decrease.
\end{proof}

\subsection{Main result}

We now provide a proof for the simple variant of our boosting variational inference algorithm (Alg.~\ref{alg:main}) with approximate backtracking (Alg.~\ref{alg:approxback}). This can trivially be extended to the away-steps and pair-wise variants using the same techniques described in \citet{Pedregosa2018Jun}.

\begin{reptheorem}{thm:convergence}
  Let $q_t$ be the $t$-th iterate generated by Algorithm~\ref{alg:main}. Let $\varepsilon_t = \frac{\varepsilon_0}{t^2}$ bound the Monte-Carlo approximation error, described in Equation~\eqref{eq:assumption}, with some initial approximation error $\varepsilon_0>0$.
  Let $\overline C_t \defeq \frac{1}{t}\sum_{i=0}^{t-1} C_t$ be the average of the curvature constant estimates.
  Then we have:
  \begin{align*}
    \dkl(q_t) - \dkl(q^\ast) \leq 
    \frac{4(1-\delta)}{t\delta(t\delta+1)}E_0
    + \frac{2\overline{C_t}}{\delta(t\delta + 1)}
    + \frac{2\varepsilon_0}{t}
  \end{align*}
  where $E_0 \defeq \dkl(q_0) - \psi(\nabla \dkl(q_0))$ is the initialization error ($\psi$ denotes the dual objective) and $\delta>0$ bounds the error of estimating the greedy subproblem defined in Equation~\eqref{eq:bbbvi}. 
\end{reptheorem}

\begin{proof}
By Lemma~\ref{lemma:optgammat}, we know that for all $\xi_t\in[0,1]$,
\begin{align}
\dkl(q_{t+1}) &\leq \dkl(q_t) - \xi_t \innerp{\nabla \dkl(q_t), q_t-s_t} + \frac{C_t\xi_t^2}{2} + \frac{2\varepsilon_0}{t^2}\\
&\leq
\dkl(q_t) -\xi_t\delta(\dkl(q_t) - \psi(\nabla \dkl(q_t))) + \frac{C_t\xi_t^2}{2} + \frac{2\varepsilon_0}{t^2}\label{eq:dualexpand}\\
&=(1-\xi_t\delta)\dkl(q_t) + \xi_t\delta\psi(\nabla \dkl(q_t)) + \frac{C_t\xi_t^2}{2} + \frac{2\varepsilon_0}{t^2}
\end{align}

Now we define a sequence recursively as $\sigma_{t+1} \defeq (1-\xi_t\delta)\sigma_t + \xi_t\delta\psi(\nabla \dkl(q_t))$ and $\sigma_0 \defeq \psi(\nabla \dkl(q_0))$.
Then if we subtract $\sigma_{t+1}$ from both sides we get,
\begin{align}
\dkl(q_{t+1}) - \sigma_{t+1}
&\leq
(1-\xi_t\delta)\dkl(q_t) + \xi_t\delta\psi(\nabla \dkl(q_t)) - ((1-\xi_t\delta)\sigma_t + \xi_t\delta\psi(\nabla \dkl(x_t)))
+ \frac{C_t\xi_t^2}{2} + \frac{2\varepsilon_0}{t^2}\\
&=
(1-\xi_t\delta)(\dkl(q_t) - \sigma_t)
+ \frac{C_t\xi_t^2}{2} + \frac{2\varepsilon_0}{t^2}\label{eq:28}
\end{align}
Recall that this inequality is valid true for all $\xi_t\in[0,1]$.
In particular, its valid for $\xi_t = \nicefrac{2}{(\delta t+2)}$.

Now define $a_t \defeq \frac{1}{2} ((t-2)\delta + 2)((t-1)\delta + 2)$.
Note that,
\begin{align}
a_{t+1}(1 - \xi_t\delta) &= \left(\frac{1}{2}((t-1)\delta + 2)(t\delta + 2)\right) \left( \frac{\delta t + 2 - 2\delta}{\delta t + 2} \right)\\
&= \frac{1}{2} ((t-1)\delta + 2) (\delta t + 2 - 2\delta) \\
&= \frac{1}{2} ((t-1)\delta + 2) ((t-2)\delta + 2)\\
&= a_t
\end{align}

Multiply both sides of the inequality in Equation~\eqref{eq:28} by $a_{t+1}$. This gives
\begin{align}
a_{t+1}(\dkl(q_{t+1}) - \sigma_{t+1})
&\leq
a_t(\dkl(q_t) - \sigma_t)
+ a_{t+1}\left(\frac{C_t\xi_t^2}{2} + \frac{2\varepsilon_0}{t^2}\right)
\end{align}

We can upper bound this further simply by noting that,
\begin{align}
a_{t+1} \xi_t^2 &= \frac{1}{2}((t-1)\delta + 2)(t\delta + 2)\left(\frac{2}{\delta t + 2}\right)^2\\
                &= \frac{1}{2}((t-1)\delta + 2)\left(\frac{4}{\delta t + 2}\right) \\
                &= 2\frac{\delta(t-1) + 2}{\delta t + 2} \\
                &\leq 2
\end{align}
Thus,
\begin{align}
a_{t+1}(\dkl(q_{t+1}) - \sigma_{t+1})
\leq
a_t(\dkl(q_t) - \sigma_t)
+ C_t + 2a_{t+1}\frac{\varepsilon_0}{t^2}
\end{align}

Let us now unroll this inequality starting at step $t$,
\begin{align}
a_{t}(\dkl(q_{t}) - \sigma_{t})
&\leq
a_0(\dkl(q_0) - \sigma_0)
+ \sum_{i=0}^{t-1} (C_i + 2a_{i+1}\frac{\varepsilon_0}{t^2})\\
&=
a_0(\dkl(q_0) - \sigma_0)
+ t\overline C_t + \frac{2\varepsilon_0}{t^2} \sum_{i=0}^{t-1}  a_{i+1}
\label{eq:in_terms_of_at}
\end{align}
where $\overline C_t\defeq \frac{1}{t}\sum_{i=0}^{t-1} C_t$.

Since $\delta \in (0,1]$, we can make a simple observation:
\begin{align}
&a_t \defeq \frac{1}{2} ((t-2)\delta + 2)((t-1)\delta + 2) \geq \frac{1}{2} (t\delta)(t\delta+1)\\
\implies
&\frac{1}{a_t} \leq \frac{2}{(t\delta)(t\delta+1)}
\end{align}
Now divide both sides of Equation~\eqref{eq:in_terms_of_at} by $a_t$ and apply this observation,
\begin{align}
\dkl(q_{t}) - \sigma_{t}
\leq
\frac{a_0}{a_t}(\dkl(q_0) - \sigma_0)
+ \frac{2\cancel{t}}{(\cancel{t}\delta)(t\delta+1)}\overline C_t + \frac{2\varepsilon_0}{t^2} \frac{1}{a_t} \sum_{i=0}^{t-1}  a_{i+1}\label{eq:summation_ai}
\end{align}
Let us focus on the summation term,
\begin{align}
    \sum_{i=0}^{t-1} a_{i+1}
    &= \frac{1}{2}\sum_{i=0}^{t-1}
     ((i-1)\delta + 2)(i\delta + 2)\\
    &\leq
    \frac{1}{2}\sum_{i=0}^{t-1}
     ((t-2)\delta + 2)((t-1)\delta + 2)\\
     &=
    \frac{1}{2}t
     ((t-2)\delta + 2)((t-1)\delta + 2)
\end{align}
Thus,
\begin{align}
\frac{1}{a_t}\sum_{i=0}^{t-1} a_{i+1}
&\leq
\frac{{2}}{{((t-2)\delta + 2)((t-1)\delta + 2)}}
{\frac{1}{2}}t
{((t-2)\delta + 2)((t-1)\delta + 2)}\\
&= t
\end{align}
Now we can use this to substitute into Equation~\eqref{eq:summation_ai},
\begin{align}
\dkl(q_t) - \sigma_t
\leq
\frac{a_0}{a_t}(\dkl(q_0) - \sigma_0)
+ \frac{2}{\delta(t\delta + 1)} \overline{C_t}
+ \frac{2\varepsilon_0}{t}
\end{align}

We now bound the ratio $a_0/a_t$. First note that the factor of $1/2$ cancels. We use the fact that $\delta\in (0,1]$ to bound the ratio. Notice that $2a_0 = (2-2\delta)(2-\delta)\leq 2(2-2\delta)$. Finally,
\begin{equation}
    2a_t = ((t-2)\delta + 2)((t-1)\delta + 2
    =
    (t\delta - 2(\delta-1))(t\delta + 1 - (\delta -1))\geq t\delta(t\delta+1)
\end{equation}
This gives,
\begin{equation}
    \dkl(q_t) - \sigma_t \leq \frac{4(1-\delta)}{t\delta(t\delta+1)}(\dkl(q_0) - \sigma_0)
    + \frac{2}{\delta(t\delta + 1)} \overline{C_t}
    + \frac{2\varepsilon_0}{t}
\end{equation}
By Proposition~\ref{prop:boundedLt}, $\overline{C_t}$ is bounded.

We now show that $f(q^{t}) - \psi(u_t) \leq f(q^{t}) - \sigma_t$.
Or equivalently, that $-\psi(u_t) \leq -\sigma_t$.
We do this by induction.
The base case is trivial since $\sigma_0 = \psi(\nabla f(q^0)) = \psi(u_0)$.
Assume it is true at step $t$. We now prove it holds for step $t+1$.
\begin{align}
-\psi(u_{t+1})
  &=
    -\psi((1-\xi_t)u_{t}+\xi_t\nabla f(q_t))\\
  &\leq
    -(1-\xi_t)\psi(u_{t})-\xi_t\psi(\nabla f(q_t))\label{eq-convergence:convexity}\\
  &\leq
    -(1-\xi_t)\sigma_{t}-\xi_t\psi(\nabla f(q_t))\label{eq-convergence:inductassum}\\
  &=
    -\sigma_{t+1}
\end{align}
Equation \eqref{eq-convergence:convexity} follows by convexity of $-\psi$.
Equation \eqref{eq-convergence:inductassum} follows by the inductive assumption.

Putting this together gives our a non-convergent rate, note the last term:
\begin{equation}
    h_t \leq 
    \frac{4(1-\delta)}{t\delta(t\delta+1)}(\dkl(q_0) - \psi(\grad\dkl(q_0)))
    + \frac{2}{\delta(t\delta + 1)} \overline{C_t}
    + \frac{2\varepsilon_0}{t}
\end{equation}
This equation shows that to achieve a convergent rate, we must choose a decay rate for $\varepsilon$ which is $\mathcal O(1/t^2)$. Doing so provides the desired rate.
\end{proof}

\section{Detailed results}\label{app:results}
\subsection{Experimental setup}
We used the \emph{Edward} probabilistic programming framework \citep{tran2016edward} for the implementation of approximate backtracking.
This allows us to apply our algorithm to any of the probabilistic models that are definable by \emph{Edward}'s domain specific language for probabilistic models.
We extend the Residual Evidence Lower Bound (RELBO) defined in \citet{locatello2018boosting}.

In our implementation of approximate backtracking, we estimate the Lipschitz constant instead of the curvature constant.
This gives a slightly different quadratic upper bound:
\begin{equation}
  Q_t(\gamma, L) = f(q_t) + \innerp{\nabla f(q_t), s_t - q_t} + \frac{L}{2}||s_t-q_t||^2
\end{equation}
which depends on the squared distance between the next component $s_t$ and the current iterate $q_t$.
Empirically, we found that the value of the Lipschitz constant was bounded resulting in an equivalent algorithm.
This approach is closer to the work of \citet{Pedregosa2018Jun} whose analysis is also in terms of the Lipschitz constant.

In our experiments, we used the KL-divergence, $\dkl(s_t\,\|\,q_t)$, rather than the squared-distance for the quadratic term.
We found this to be more stable than the squared distance for density functions.

For most experiments, we observe that Boosting VI needs a couple of iterations to improve upon simple black box VI i.e.\ the first component.

We implemented our own version of all algorithms including those of \citep{locatello2018boosting}.
In this process, we found some discrepancy between our results and the results reported in their paper.
We report the results that we were able to reproduce.

\subsection{Early termination of adaptive loop}

\begin{figure}[t]
 \center
    \includegraphics[width=0.95\linewidth]{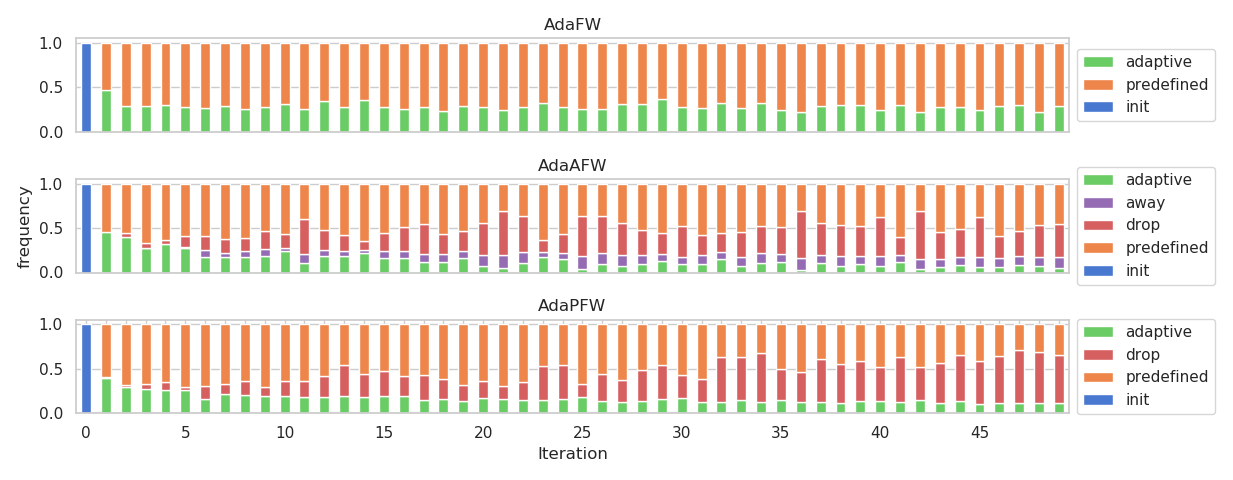}
    \includegraphics[width=0.95\linewidth]{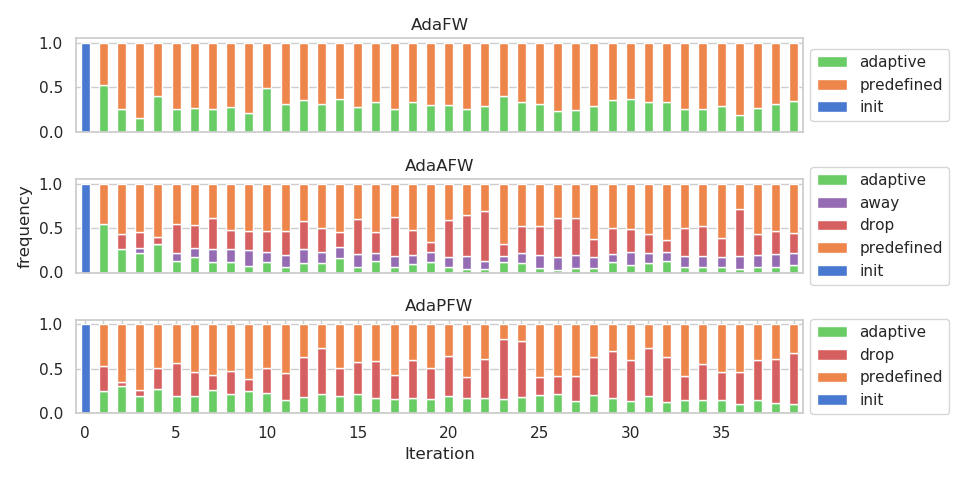}
    \caption{Frequency of different types of steps for \textsc{ChemReact} (top) and \textsc{eICU} (bottom) experiments.}
    \label{fig:iter_info_blr_and_eicu}
\end{figure}

As discussed in Section~\ref{sec:local-boosting}, our implementation of approximate backtracking includes an early termination criteria.
When the curvature is enormous, the approximate backtracking loop takes a prohibively long time to find a suitable quadratic tanget function.
To avoid these cases, we break out of the approximate backtracking loop after a fixed number of steps and perform a predefined step-size update.
In particular, we set $\textsc{imax} = 10$ in Algorithm~\ref{alg:approxback}.

In Figure~\ref{fig:iter_info_blr_and_eicu}, we show the frequency of different update steps at every iteration aggregated over all hyperparameters and ten random seeds.
This gives a total of 600 runs for \textsc{ChemReact} and 200 runs for \textsc{eICU}.

\subsection{Line-search instability}

Here we provide details for the experiment presented in the main paper in Section~\ref{subsec:syn_data}.

We implemented projected stochastic gradient for line-search on $\gamma\in[0,1]$. We treated the gradient descent step-size, denoted $b_0$, as a hyperparameter. Line-search does not take into account any smoothness or curvature values which leaves empirical tuning of $b_0$ as the only option.

We set the hyperparameters of our adaptive step-size algorithm as:
\begin{align*}
  \tau &= 2.0\\
  \frac{1}{\eta} &= 0.1\\
  L_{-1} &= 10.0
\end{align*}
For line-search we set $b_0=0.1$.

\subsection{Bayesian logistic regression}
\subsubsection{Hyperparameters}
For the \textsc{ChemReact} experiment, we selected the adaptive hyperparameters as
\begin{align*}
  \tau &\in \{1.01, 1.1, 1.5, 2.0, 5.0\}\\
  \frac{1}{\eta}&\in\{0.1, 0.01, 0.5, 0.99\}\\
  L_{-1}&\in\{0.01, 1.0, 100.0\}
\end{align*}
For line-search, the initial step-size was chosen as $b_0\in\{0.001, 0.0001, 0.0005, 0.01, 0.1, 0.05, 0.005\}$

For \textsc{eICU} dataset, we set $b_0 = 10^{-8}$ and ran the experiment over 10 replicates using the same range of $\eta, \tau$ as for \textsc{ChemReact}. We set $L_{-1} = 1.0$.

\subsubsection{Hyperparameters}

We set the initial Lipschitz estimate $L_{-1}=1.0$.
For line-search, we attempted a number of values for the initial step-size $b_0 \in \{10^{-10}, 10^{-9}, 10^{-8}, 10^{-7}, 10^{-6}\}$ according to the magnitude of the gradient but were unable to obtain a stable algorithm.
Line-search consistently returned $\gamma = 0$ for all of these values.

\subsection{Memory usage on \textsc{eICU}}

Here we present further results demonstrating the memory efficiency of our proposed corrective methods on the \textsc{eICU} dataset. These results are comparable to what is presented in Figure~\ref{fig:memusage_chemreact} of the main paper.

\begin{figure}[ht]
    \centering
    \includegraphics[width=.7\linewidth]{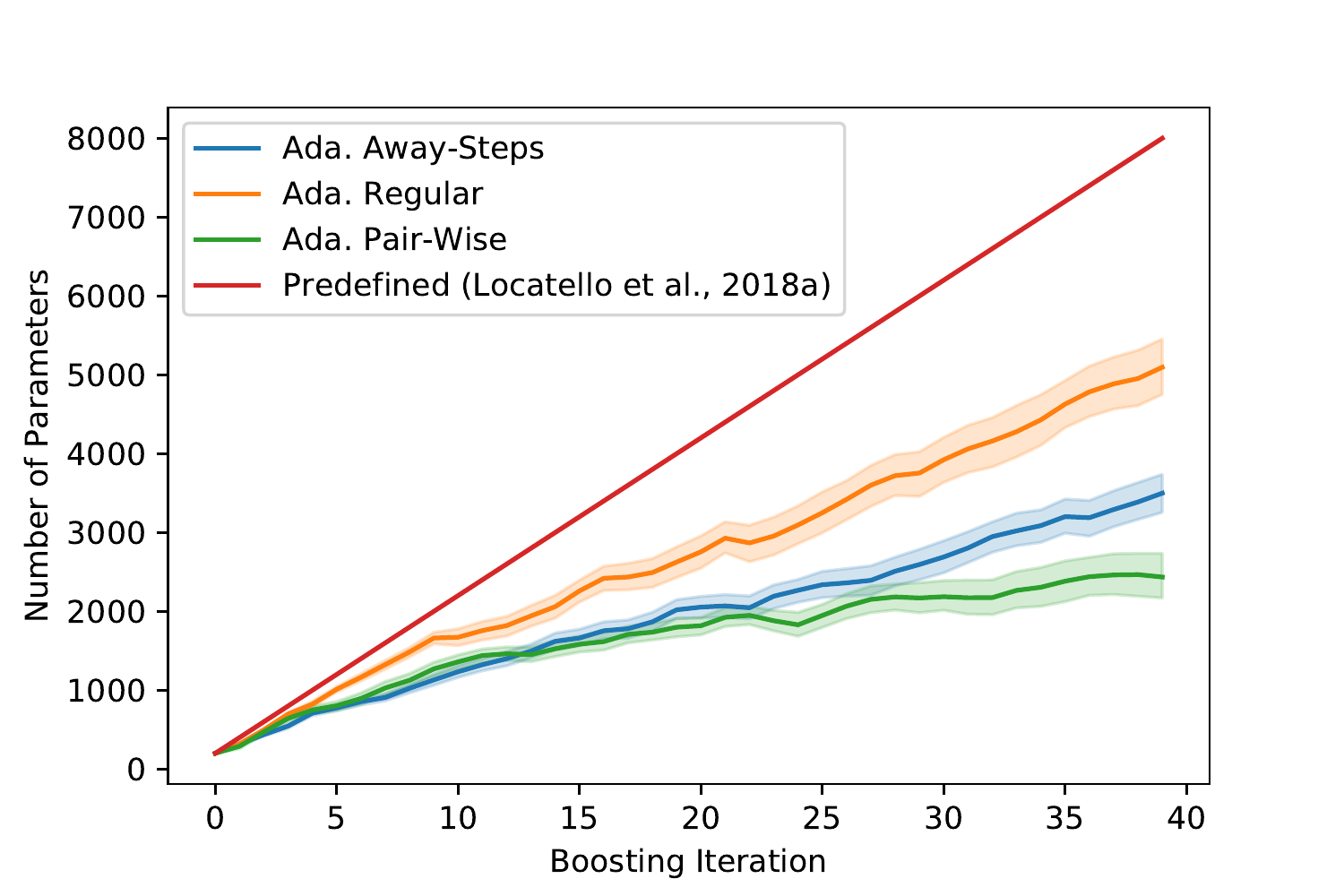}
    \caption{Comparing the number of parameters per iteration to previous work on \textsc{eICU}}
    \label{fig:memusage_eicu}
\end{figure}

\end{document}